\documentclass{article}

    \PassOptionsToPackage{numbers, compress}{natbib}



\usepackage[final]{neurips_2024}


\usepackage[utf8]{inputenc} 
\usepackage[T1]{fontenc}    
\usepackage{hyperref}       
\usepackage{url}            
\usepackage{booktabs}       
\usepackage{amsfonts}       
\usepackage{nicefrac}       
\usepackage{microtype}      
\usepackage{xcolor}         
\usepackage{graphicx}
\usepackage{amsmath, amssymb, amsthm}
\usepackage{mathtools}
\usepackage{multirow}
\usepackage{dsfont}
\theoremstyle{plain}
\newtheorem{theorem}{Theorem}[section]
\newtheorem{proposition}[theorem]{Proposition}
\usepackage{subcaption}

\theoremstyle{remark}

\title{2D-OOB: Attributing Data Contribution Through Joint Valuation Framework}

%

\author{%
 Yifan Sun\thanks{Equal contribution.}\\
  University of Illinois Urbana-Champaign\\
  \texttt{yifan50@illinois.edu} \\
  \And
  Jingyan Shen$^{*}$ \\
  Columbia University\\
  \texttt{js5544@columbia.edu} \\
  \AND
  Yongchan Kwon\thanks{Corresponding author.} \\
  Columbia University\\
  \texttt{yk3012@columbia.edu} 
}

\begin{document}

\maketitle

\begin{abstract}
Data valuation has emerged as a powerful framework for quantifying each datum's contribution to the training of a machine learning model. However, it is crucial to recognize that the quality of \textit{cells} within a single data point can vary greatly in practice. For example, even in the case of an abnormal data point, not all cells are necessarily noisy. The single scalar score assigned by existing data valuation methods blurs the distinction between noisy and clean cells of a data point, making it challenging to interpret the data values. In this paper, we propose \texttt{2D-OOB}, an out-of-bag estimation framework for jointly determining helpful (or detrimental) samples as well as the particular cells that drive them. Our comprehensive experiments demonstrate that \texttt{2D-OOB} achieves state-of-the-art performance across multiple use cases while being exponentially faster. Specifically, \texttt{2D-OOB} shows promising results in detecting and rectifying fine-grained outliers at the cell level, and localizing backdoor triggers in data poisoning attacks. 
\end{abstract}

\section{Introduction}
\label{sec:intro}
From customer behavior prediction and medical image analysis to autonomous driving and policy making, machine learning (ML) systems process ever increasing amounts of data. In such data-rich regimes, a fraction of the samples is often noisy, incorrect annotations are likely to occur, and uniform data quality standards become difficult to enforce. To address these challenges, data valuation emerges as a research field receiving increasing attention, focusing on properly assessing the contribution of each datum to ML training \citep{ghorbani2019data, kwon2023data}. These methods have proven useful in identifying low-quality samples that can be detrimental to model performance, as well as selecting subsets of data that are representative of enhanced model performance \citep{yoon2020data, liang2022advances, wang2024rethinking}. Furthermore, they are widely applicable in data marketplace for fair revenue allocation and incentive design \citep{zhao2023addressing,wang2024efficient, sim2023incentives}.

Nevertheless, existing data valuation methods assign a scalar score to each datum, thereby failing to account for the varied roles of individual cells. This leaves the valuation rationale unclear and can be unsatisfactory and sub-optimal in various practical scenarios. Firstly, whenever a score is assigned to a data point by a particular data valuation method, it is crucial to understand the underlying justifications to ensure transparency and reliability, especially in high-stakes decision making \citep{sim2022data}.
Secondly, it is important to recognize the fact that even if a data point is of low quality, it is rarely the case that all the cells within this data point are noisy \citep{rousseeuw2018detecting, leung2016robust, su2023robust}. The absence of detailed insights into how individual cells contribute to ML training inevitably leads to discarding the entire data point. This can result in substantial data waste, particularly when only a few cells are noisy and data acquisition is expensive.
Finally, in data markets, different cells within a data point may originate from different data sellers \citep{bleiholder2009data, fernandez2020data}. 
Consequently, a singular valuation for the entire point fails to offer equitable compensation to all contributing parties.

\begin{figure}
\centering
  \includegraphics[width = \textwidth]{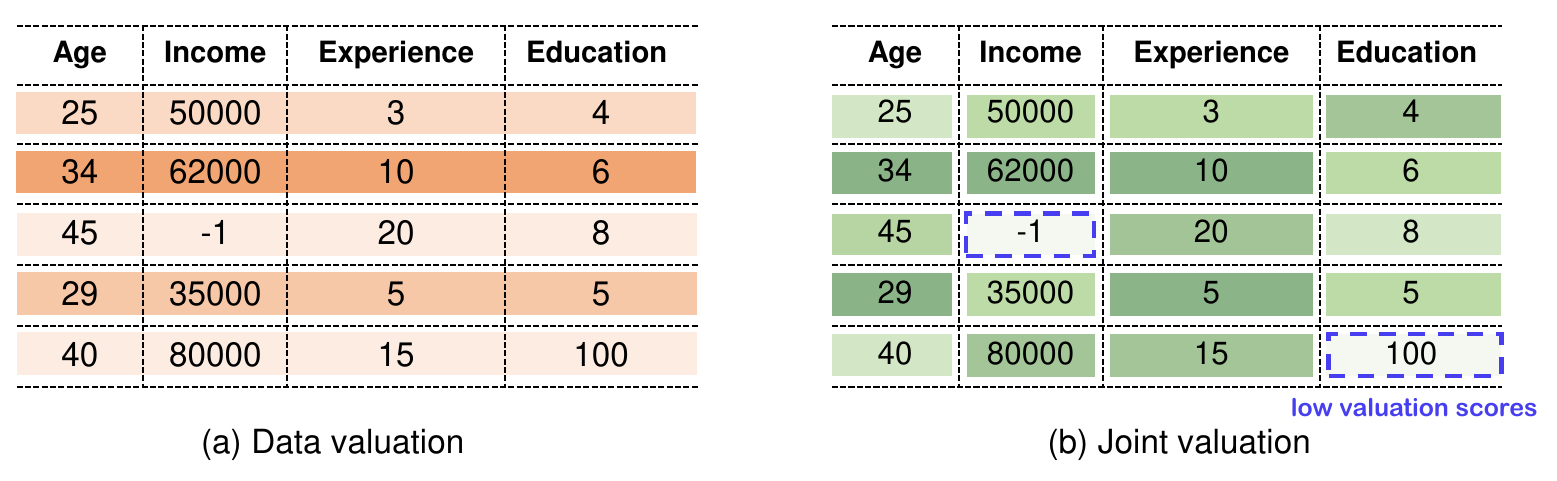}
  \caption{\textbf{Comparison of data valuation and joint valuation.} (a) Data valuation evaluates the quality of individual data points, whereas (b) joint valuation evaluates the quality of individual cells. Both panels illustrate the same hypothetical dataset, while darker colors indicate higher quality or importance. 
  As illustrated in panel (a), data valuation can only identify that the third and fifth data points are of low quality, but it lacks further feature-level attribution. This limitation may result in discarding the entire data point, even when only certain cells are problematic. In contrast, joint valuation provides a finer level of attribution than data valuation and aims to reveal how individual features contribute to data values. As shown in panel (b), the joint valuation framework can identify outlier cells (highlighted by blue boxes), such as $-1$ in "Income" and $100$ in "Education", providing detailed interpretations of data values.
  } 
\label{fig: comparison}
\end{figure}

\paragraph{Our contributions} In this paper, we propose \texttt{2D-OOB}, a powerful and efficient joint valuation framework that can attribute a data point's value to its features. \texttt{2D-OOB} quantifies the importance of each \textit{cell} in a dataset, as illustrated in Figure \ref{fig: comparison}, providing interpretable insights into which cells are associated with influential data points. Our method is computationally efficient as well as theoretically supported by its connections with \texttt{Data-OOB} \citep{kwon2023data}. 
Moreover, our extensive empirical experiments demonstrate the practical effectiveness of \texttt{2D-OOB} in various use cases. \texttt{2D-OOB} accurately identifies cell outliers and pinpoints which cells to fix to improve model performance. Additionally, \texttt{2D-OOB} enables inspection of data poisoning attacks by precisely localizing the backdoor trigger, an artifact inserted into a training sample to induce malicious model behavior \citep{gu2017badnets,chen2017targeted}. On average, \texttt{2D-OOB} is $200$ times faster than state-of-the-art methods across all datasets examined. 

\section{Preliminaries}

\paragraph{Notations} 
Throughout this paper, we focus on supervised learning settings. For $d \in \mathbb{N}$, we denote an input space and an output space by $\mathcal{X} \subseteq \mathbb{R}^d$ and $\mathcal{Y} \subseteq \mathbb{R}$, respectively. We denote a training dataset with $n$ data points by $\mathcal{D}=\{(x_i, y_i)\}_{i=1} ^n$ where $(x_i, y_i)$ is the $i$-th pair of the input covariates $x_i \in \mathcal{X}$ and its output label $y_i \in \mathcal{Y}$. For an event $A$, an indicator function $\mathds{1}(A)$ is $1$ if $A$ is true, otherwise $0$. For $j \in \mathbb{N}$, we set $[j]:=\{1, \dots, j\}$. For a set $S$, we denote its power set by $2^S$ and its cardinality by $|S|$.  

%

\paragraph{DataShapley}
\label{sec: marginal}
The primary goal of data valuation is to quantify the contribution of individual data points to a model's performance. Leveraging the Shapley value in cooperative game theory \citep{shapley1953value}, \texttt{DataShapley} \citep{ghorbani2019data} measures the average change in a utility function $U:2^\mathcal{D}\rightarrow\mathbb{R} $ over all possible subsets of the dataset that either include or exclude the data point. For $i\in [n]$, \texttt{DataShapley} of $i$-th datum is defined as follows.
\begin{align}
    \phi_i^{\mathrm{Shap}}:= \frac{1}{n}\sum_{k=1}^{n}\frac{1}{\binom{n-1}{k-1}}\sum_{S\in\mathcal{D}_k^{(i)}} [U(S\cup\{(x_i,y_i)\})-U(S)],
    \label{eqn:datashap}
\end{align}
where $\mathcal{D}_k^{(i)}:=\{S\subseteq\mathcal{D} | (x_i,y_i)\notin S,|S|=k-1\}$.
\texttt{DataShapley} $\phi_i^{\mathrm{Shap}}$ in \eqref{eqn:datashap} considers every set $S \in \mathcal{D}_k^{(i)}$ and computes the average difference in utility $U(S\cup\{(x_i,y_i)\})-U(S)$.
It characterizes the impact of a data point, but its computation requires evaluating $U$ for all possible subsets of $\mathcal{D}$, rendering precise calculations infeasible. Many efficient computation algorithms have been proposed \citep{jia2019efficient, pmlr-v130-kwon21a, 10.1145/3588728}, and in these studies, Shapley-based methods have demonstrated better effectiveness in detecting low-quality samples than standard attribution approaches, such as leave-one-out and influence function methods \citep{koh2017understanding, feldman2020neural}.

\paragraph{Data-OOB} As an alternative efficient data valuation method, \citet{kwon2023data} propose \texttt{Data-OOB}, which leverages a bagging model and measures the similarity between the nominal label and weak learners' predictions. To be more specific, consider a bagging model consisting of $B$ weak learners, where for $b\in [B]$, the $b$-th weak learner $\hat{h}_b$ is given as a minimizer of the weighted empirical risk,
\begin{equation*}
    \hat{h}_b := \mathrm{argmin}_{h} \sum_{i=1} ^n w_{bi} \ell(y_{i} , h(x_{i})),
    \label{eqn:weak_learner_bagging}
\end{equation*}
where $\ell:\mathcal{Y} \times \mathcal{Y} \to \mathbb{R}$ is a loss function and $w_{bi} \in \mathbb{N}$ is the number of times the $i$-th datum $(x_i, y_i)$ is selected by the $b$-th bootstrap dataset. Let $\mathbf{w}_b$ be a weight vector $\mathbf{w}_b:=(w_{b1}, \dots, w_{bn})$ for all $b \in [B]$. For $i \in [n] $ and $ \{(\mathbf{w}_b,\hat{h}_b)\}_{b=1} ^B$, \texttt{Data-OOB} of the $i$-th datum is defined as follows.
\begin{align}
    \phi_i^{\mathrm{OOB}} := \frac{\sum_{b=1} ^B \mathds{1}(w_{bi} =0) T(y_i, \hat{h}_b (x_i)) }{\sum_{b=1} ^B \mathds{1}(w_{bi} =0)},
    \label{eqn:original_OOB}
\end{align}
where $T(y_i, \hat{h}_b (x_i))$ is a score function evaluated at $(x_i, y_i)$. We assume that the higher $T$, the better the prediction. In classification settings, a common choice for $T$ is $\mathds{1}(y_i = \hat{h}_b (x_i))$, and in this case, \texttt{Data-OOB} $\phi_i^{\mathrm{OOB}}$ measures the average similarity between a nominal label $y_i$ and weak learners' predictions $\hat{h}_b (x_i)$ when a datum $(x_i, y_i)$ is \textit{not} sampled in a bootstrap dataset. It intuitively captures the quality of a data point. For instance, when $(x_i, y_i)$ is a mislabeled sample or an outlier, the label $y_i$ is likely to differ from $\hat{h}_b (x_i)$, resulting in $\phi_i^{\mathrm{OOB}}$ being close to zero. 

It is noteworthy that \texttt{Data-OOB} in \eqref{eqn:original_OOB} can be computed by training a single bagging model, making it computationally efficient. \citet{kwon2023data} show that \texttt{Data-OOB} can easily scale to millions of data points, but for \texttt{DataShapley} this is often very impractical. In addition, \texttt{Data-OOB} is typically comparable to or even more effective than \texttt{DataShapley} in detecting mislabeled data points and selecting helpful data points \citep{kwon2023data, jiang2023opendataval}.

\section{Attributing Data Contribution through Joint Valuation Framework}

Data valuation quantifies the utility of data points, however, it fails to identify which features contribute to these data values and to what extent. For instance, in anomaly detection tasks, data valuation methods can be deployed to detect anomalous data points but cannot explain why they are considered abnormal, which is generally not desirable in practice. To address this challenge, we introduce a joint valuation framework that assigns \textit{a cell score} to each feature of a data point. Here, a cell score is designed to quantify how each feature affects the value of an individual data point, thereby attributing the data value to specific features. 


To the best of the author's knowledge, \citet{liu20232d} were the first to consider the concept of joint valuation in the literature, proposing \texttt{2D-Shapley} as a means to quantitatively interpret \texttt{DataShapley}. To formalize this, we denote a 2D utility function by $u:[n]\times [d] \to \mathbb{R}$, which takes as input a subset of data points $S\subseteq[n]$ and a subset of features $F \subseteq [d]$, measuring the utility of a fragment of the given dataset consisting of cells $\{(i,j)\}_{i\in S, j\in F}$, where a tuple $(i,j)$ denotes a cell at the $i$-th datum and the $j$-th column. Then, \texttt{2D-Shapley} is defined as
\begin{align}
    \psi_{ij}^{\mathrm{2D-Shap}}:= \frac{1}{nd}\sum_{k=1}^{n}\sum_{l=1}^{d}\frac{1}{\binom{n-1}{k-1}\binom{d-1}{l-1}}\sum_{(S,F)\in\mathcal{D}_{k,l}^{(i,j)}}M_{u}^{i,j}(S,F)
    \label{eqn:2dshap}
\end{align}
where $\mathcal{D}_{k,l}^{(i,j)}:=\{(S,F)| S\subseteq[n] \backslash \{i\},F\subseteq[d] \backslash \{j\}, |S|=k-1, |F|=l-1\} $ and 
\begin{equation*}
    M_{u}^{i,j}(S,F) = u(S\cup \{i\}, F \cup \{j\})+u(S,F)-u(S\cup \{i\},F)-u(S,F\cup \{j\}).
\end{equation*}
The function $M_{u}^{i,j}$ allows us to quantify how much removing a specific cell at $(i,j)$ from a given set $(S\cup\{i\}, F\cup\{j\})$ affects the overall utility, and \texttt{2D-Shapley} $\psi_{ij}^{\mathrm{2D-Shap}}$ evaluates the average $M_{u}^{i,j}$ across all possible data fragments $(S,F)\in\mathcal{D}_{k,l}^{(i,j)}$.

Similar to \texttt{DataShapley}, the permutation of all rows and columns required for exact \texttt{2D-Shapley} calculations presents significant computational challenges. To address this, \citet{liu20232d} develop \texttt{2D-KNN}, which utilizes $k$-nearest-neighbors models as surrogates to approximate \texttt{2D-Shapley} values. However, the approximation methods can compromise the accuracy of valuations \citep{kwon2023data,jiang2023opendataval}. Additionally, \texttt{2D-KNN} still faces challenges scaling to large-scale datasets and high-dimensional settings.

To address these limitations, we propose \texttt{2D-OOB}, an \textit{efficient} and \textit{model-agnostic} joint valuation framework that leverages out-of-bag estimation to attribute data contribution. We also illustrate how \texttt{2D-OOB} is connected to \texttt{Data-OOB}, thereby facilitating sample-wise feature-level interpretation for data valuation, as discussed in Section \ref{sec:theory}. 

\subsection{2D-OOB: an efficient joint valuation framework}
\label{sec:formalization}

Our idea builds upon the subset bagging model \citep{ho1995random}, which is well recognized as an earlier version of Breiman's random forest model \citep{breiman2001random}. A key distinction from a standard bagging model is that a weak learner in a subset bagging model is trained on a randomly selected subset of features. For $b\in[B]$, we denote the $b$-th random feature subset by $S_b \subseteq [d]$. 
Then, the $b$-th weak learner of a subset bagging model is given as follows.
\begin{equation*}
    \hat{f}_{b} := \mathrm{argmin}_{f} \sum_{i=1} ^n w_{bi} \ell(y_{i} , f(x_{i, S_b})),
\end{equation*}
where $x_{i, S_b}$ is a subvector of $x_i$ that only takes elements in a subset $S_b$. This difference enables us to assess the impact of which features are more influential: if $S_b$ includes a helpful (or detrimental) feature, we can expect the out-of-bag prediction $\hat{f}(x_{i, S_b})$ to be good (or poor). 
We formalize this intuition and propose \texttt{2D-OOB}. For $i \in [n]$, $j \in [d]$ and $ \{(\mathbf{w}_b, S_b, \hat{f}_b)\}_{b=1} ^B$, the \texttt{2D-OOB} for the $j$-th cell of the $i$-th data point is defined as follows,
\begin{align}
    \psi_{ij} ^{\mathrm{2D-OOB}} 
    := \frac{\sum_{b=1} ^B \mathds{1} (w_{bi} =0, j \in S_b ) T(y_i, \hat{f}_b (x_{i, S_b})) }{\sum_{b=1} ^B \mathds{1} (w_{bi} =0, j \in S_b)},
    \label{eqn:df_oob}
\end{align}
where $T: \mathcal{Y} \times  \mathcal{Y} \rightarrow \mathbb{R}$ is a utility function that scores the performance of the weak learner $\hat{f}_b (x_{i, S_b})$ on the $i$-th datum $(x_i,y_i)$. Specifically, for binary or multi-class classification problems, we can adopt $T(y_i, \hat{f}_b (x_{i, S_b})) = \mathds{1}(y_i = \hat{f}_b (x_{i, S_b}))$. In this case, \texttt{2D-OOB} measures the average accuracy score of out-of-bag predictions (specifically, when the $i$-th data point is out-of-bag) if the $j$-th feature is used in training $\hat{f}_b$. For regression problems, we can use the negative squared error loss function, defined as $T(y_i, \hat{f}_b (x _{i, S_b} )) = -(y_i  - \hat{f}_b (x _{i, S_b} ))^2$. In practice, $\mathcal{X}$ could also be incorporated into $T$ to suit the specific use case.

While \texttt{Data-OOB} in~\eqref{eqn:original_OOB} aims to assess the impact of the $i$-th datum, \texttt{2D-OOB} in~\eqref{eqn:df_oob} further provides interpretable insights by evaluating the data point with various combinations of features, revealing which cells are influential to model performance. By leveraging the subset bagging scheme, \texttt{2D-OOB} only requires a single training of the bagging model, making it computationally efficient.

\subsection{Connection to Data-OOB}
\label{sec:theory}

We now present interpretable expressions of how \texttt{2D-OOB} connects to \texttt{Data-OOB} in the following proposition. To begin with, we denote a set of subsets of $[d]$ by $\mathcal{S} := \{S \subseteq [d] \}$. With $\{(\mathbf{w}_b, \hat{f}_b)\}_{b=1} ^B$,
we define the $i$-th \texttt{Data-OOB} when a particular subset $S$ is used as follows and denote it by $\phi_i ^{\mathrm{OOB}} (S)$.
\begin{equation*}
    \phi_i ^{\mathrm{OOB}} (S) := \frac{\sum_{b=1} ^B \mathds{1}(w_{bi}=0 ) T(y_i, \hat{f}_{b} (x_{i, S})) }{ \sum_{b=1} ^B \mathds{1}(w_{bi}=0 ) }.
\end{equation*}

\begin{proposition}
For all $i\in[n]$ and $j\in[d]$, $\psi_{ij} ^{\mathrm{2D-OOB}}$ can be expressed as follows.
\begin{align*}
    \psi_{ij} ^{\mathrm{2D-OOB}} &= \mathbb{E}_{\hat{F}_S} [ \phi_i ^{\mathrm{OOB}} (S) \mid j \in S], 
\end{align*}
where $\hat{F_S}$ is an empirical distribution with respect to $S$ induced by the sampling process. 
\label{prop:representation}
\end{proposition}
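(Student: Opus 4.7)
The plan is to derive the identity by partitioning the sum over $b \in [B]$ in the definition of $\psi_{ij}^{\mathrm{2D-OOB}}$ according to the value of the random feature subset $S_b$, and then to recognize the resulting weighted average as a conditional empirical expectation. The key observation is that everything in the definition of $\psi_{ij}^{\mathrm{2D-OOB}}$ that depends on $b$ only through $S_b$ can be collected subset-by-subset.

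Concretely, I would first rewrite both the numerator and the denominator of $\psi_{ij}^{\mathrm{2D-OOB}}$ by grouping terms with a common value of $S_b$. Since $\mathds{1}(j \in S_b)$ is nonzero exactly when $S_b$ lies in $\{S \subseteq [d] : j \in S\}$,
\begin{equation*}
\sum_{b=1}^{B} \mathds{1}(w_{bi}=0,\, j \in S_b)\, T(y_i, \hat{f}_b(x_{i, S_b})) \;=\; \sum_{S: j \in S}\; \sum_{b:\, S_b = S} \mathds{1}(w_{bi}=0)\, T(y_i, \hat{f}_b(x_{i, S})),
\end{equation*}
and the analogous identity (without the $T$ factor) holds for the denominator. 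Setting $N_S := \sum_{b:\, S_b = S} \mathds{1}(w_{bi}=0)$, the inner sum over $\{b : S_b = S\}$ is, by the definition of $\phi_i^{\mathrm{OOB}}(S)$ applied to this subset of weak learners, equal to $N_S \cdot \phi_i^{\mathrm{OOB}}(S)$. This gives
\begin{equation*}
\psi_{ij}^{\mathrm{2D-OOB}} \;=\; \frac{\sum_{S: j \in S} N_S\, \phi_i^{\mathrm{OOB}}(S)}{\sum_{S: j \in S} N_S}.
\end{equation*}

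Finally I would identify the normalized weights $N_S / \sum_{S': j \in S'} N_{S'}$ with the conditional empirical distribution of $S_b$ given $j \in S$, where the unconditional empirical distribution $\hat{F}_S$ is the one induced by the subset sampling across the out-of-bag indices $\{b : w_{bi}=0\}$. Once this identification is made, the right-hand side is literally $\mathbb{E}_{\hat{F}_S}[\phi_i^{\mathrm{OOB}}(S) \mid j \in S]$ by definition of conditional expectation under a discrete empirical measure, closing the proof.

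The proof itself is a routine swap of summation order, so the only real point that needs care, rather than a genuine obstacle, is pinning down the precise meaning of $\hat{F}_S$ and of $\phi_i^{\mathrm{OOB}}(S)$: the sum defining $\phi_i^{\mathrm{OOB}}(S)$ must be understood as ranging only over those $b$ with $S_b = S$, since otherwise $\hat{f}_b(x_{i,S})$ is not well-defined (the weak learner $\hat{f}_b$ takes arguments of dimension $|S_b|$). Once this reading is fixed, the combinatorial identity and the identification of $\hat{F}_S$ with the empirical sampling distribution fall out immediately.
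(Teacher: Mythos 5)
Your proposal is correct and follows essentially the same route as the paper's proof: both regroup the sum over weak learners $b$ by the value of the feature subset $S_b$ (the paper does this via indicators $\gamma_{bl}$, you via the partition $\{b : S_b = S\}$, which is the same thing), express $\psi_{ij}^{\mathrm{2D-OOB}}$ as a convex combination $\sum_{S \ni j} \alpha_S\, \phi_i^{\mathrm{OOB}}(S)$ with weights proportional to the out-of-bag counts, and identify those weights with the conditional empirical distribution. Your remark that $\phi_i^{\mathrm{OOB}}(S)$ must be read with the sum restricted to $\{b : S_b = S\}$ matches the paper's own ``slight abuse of notation'' where it inserts $\mathds{1}(\gamma_{bl}=1)$ into that definition.
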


A proof is given in Appendix \ref{sec:proof}. Proposition~\ref{prop:representation} shows that \texttt{2D-OOB} $\psi_{ij} ^{\mathrm{2D-OOB}}$ can be expressed as a conditional empirical expectation of \texttt{Data-OOB} provided that the $j$-th feature is used in \texttt{Data-OOB} computation. It provides intuitive interpretations: for a fixed $i$ and $j \neq k$, $\psi_{ij} ^{\mathrm{2D-OOB}} > \psi_{ik} ^{\mathrm{2D-OOB}}$ implies that the cell $x_{ij}$ is more helpful to achieve a higher OOB score than the cell $x_{ik}$, where the OOB score serves as an indicator of model performance. By distinguishing the contributions of individual cells, \texttt{2D-OOB} effectively realizes joint valuation, providing a finer granularity of analysis that links feature-level importance to individual data quality.

\section{Experiments}
\label{sec: experiment}
In this section, we empirically show the effectiveness of \texttt{2D-OOB} across multiple use cases of the joint valuation: \textit{cell-level outlier detection}, \textit{cell fixation}, and \textit{backdoor trigger detection}. To the best of our knowledge, the latter two use cases are introduced here for the first time within the joint valuation framework.
As a summary, \texttt{2D-OOB} can precisely identify anomalous cells that should be prioritized for examination and subsequent fixation to improve model performance. In the context of backdoor trigger detection, \texttt{2D-OOB} demonstrates its efficacy by accurately identifying different types of triggers within poisoned data, showcasing its proficiency in detecting non-random, targeted anomalies. Our method also exhibits high computational efficiency through run-time comparison. 

Throughout all of our experiments, \texttt{2D-OOB} uses a subset bagging model with $B = 1000$ decision trees. We randomly select a fixed ratio of features to build each decision tree. Unless otherwise specified, we utilize half of the features for each weak learner and set $T(y_i, \hat{f}(x_{i, S_b})) = \mathds{1}(y_i = \hat{f}(x_{i, S_b}))$. The run time is measured on a single Intel Xeon Gold 6226 2.9 GHz CPU processor. We provide a detailed ablation study of key hyperparameters in Section \ref{sec:ablation}.

\subsection{Cell-level outlier detection}
\label{sec:cell-level outlier detection}
\paragraph{Experimental setting} 
In practical situations, even when dealing with abnormal data points, it is not always the case that all cells are noisy \citep{rousseeuw2018detecting,liu20232d, kriegel2009outlier}. To simulate more realistic settings, we introduce noise to certain \textit{cells} in the following two-step process: First, we randomly select $20\%$ rows for each dataset. We then select $20\%$ columns uniformly at random, allowing each selected row to have a different set of perturbed cells. We inject noises sampled from the low-probability region into these cells, following \citet{du2022vos} and \citet{liu20232d}. Details on the outlier injection process can be found in Appendix \ref{sec:cell-level outlier generation}. 

We use $12$ publicly accessible binary classification datasets from OpenML, encompassing a range of both low and high-dimensional datasets, which have been widely used in the literature \citep{ghorbani2019data, kwon2021beta, kwon2023data}. Details on these datasets are presented in Appendix \ref{sec:datasets}. For each dataset, $1000$ and $3000$ data points are randomly sampled for training and test datasets, respectively. For the baseline method, we consider \texttt{2D-KNN}, a fast and performant variant of \texttt{2D-Shapley} \citep{liu20232d}. We incorporate a distance regularization term in the utility function $T$ for enhanced performance. 

\begin{figure}
  \centering
  \includegraphics[width = \textwidth]{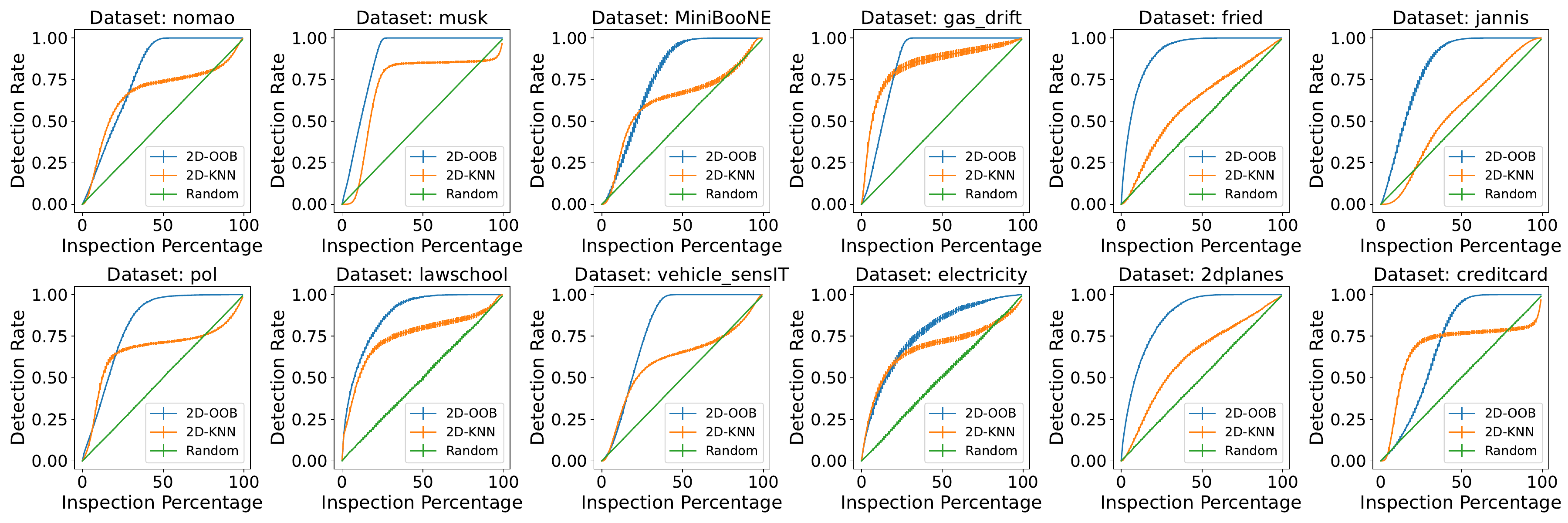}
  \vspace{-6mm}
  \caption{\textbf{Cell-level outlier detection rate curves for \texttt{2D-OOB}, \texttt{2D-KNN}, and \texttt{Random}.} The x-axis represents the percentage of inspected cells. The y-axis represents the detection rate, defined as the ratio of the number of detected outlier cells to the total number of outlier cells present in a dataset. The error bars show a $95\%$ confidence interval based on $30$ independent experiments. We examine the cells in ascending order, starting from those with the lowest values, and thus a curve closer to the left-top corner indicates better performance. \texttt{2D-OOB} efficiently detects the majority of outlier cells by examining only a small fraction of the total cells, while \texttt{2D-KNN} and \texttt{Random} require scanning nearly all the cells.
  }
  \label{Fig:cell_outlier_curve}
\end{figure}

\paragraph{Results} We calculate the valuations for each cell using our joint valuation framework. Ideally, the outlier cells should receive a low valuation. We then arrange the cell valuations in \textit{ascending} order and inspect those cells with the lowest values first. 

The detection rate curve of inserted outlier is shown in Figure \ref{Fig:cell_outlier_curve}. For all datasets, \texttt{2D-OOB} successfully identifies over $90\%$ of the outlier cells by inspecting only $30\%$ of the bottom cells. In comparison, \texttt{2D-KNN} requires examining nearly $90\%$ of the cells to achieve the same detection level.

We also evaluate the area under the curve (AUC) as a quantitative metric and measure the run-time. As Table \ref{tab:outlier} shows, \texttt{2D-OOB} achieves an average AUC of $0.83$ across $12$ datasets, compared to $0.67$ for \texttt{2D-KNN}, while being significantly faster. For high-dimensional datasets such as the musk dataset, which comprises $166$ features, \texttt{2D-KNN} would take more than an hour to process, while \texttt{2D-OOB} can finish in seconds. Furthermore, we present additional results on \textbf{multi-class classification} datasets in Appendix \ref{sec: two-stage}, demonstrating the consistently superior performance and efficiency of \texttt{2D-OOB}.

\begin{table}[t]
\caption{\textbf{Cell-level outlier detection results.} AUC and run-time comparison between \texttt{2D-OOB} and \texttt{2D-KNN} across twelve binary classification datasets. The average and standard error of the AUC and run-time (in seconds) based on $30$ independent experiments are denoted by ``average $\pm$ standard error''. Bold numbers denote the best method. The AUC value for the \texttt{Random} method consistently remains at $0.5$ across all datasets. In every dataset, \texttt{2D-OOB} achieves a significantly higher AUC while being orders of magnitude faster than \texttt{2D-KNN}.
}
\label{tab:outlier}
\begin{center}
\resizebox{0.8\textwidth}{!}{
\begin{tabular}{l|cc|cc}
\toprule
\multirow{2}{*}{Dataset}  & \multicolumn{2}{c}{\textbf{AUC} $\uparrow$} & \multicolumn{2}{|c}{\textbf{Run-time} $\downarrow$} \\
& \texttt{2D-OOB} (ours) & \texttt{2D-KNN} & \texttt{2D-OOB} (ours) & \texttt{2D-KNN} \\
\midrule
lawschool        & \textbf{0.88$\pm$ 0.0027} & 0.75$\pm$ 0.0011 & \textbf{3.33 $\pm$ 0.06}& 177.56 $\pm$ 1.92\\
electricity      & \textbf{0.77$\pm$ 0.0072} & 0.68$\pm$ 0.0014 & \textbf{3.39 $\pm$ 0.07}& 191.38 $\pm$ 2.60\\
fried            & \textbf{0.91$\pm$ 0.0015} & 0.61$\pm$ 0.0005 & \textbf{3.97 $\pm$ 0.10}& 322.79 $\pm$ 2.98\\
2dplanes         & \textbf{0.87$\pm$ 0.0015} & 0.62$\pm$ 0.0005 & \textbf{3.46 $\pm$ 0.05}& 295.25 $\pm$ 2.37\\
creditcard       & \textbf{0.72$\pm$ 0.0028} & 0.69$\pm$ 0.0011 & \textbf{4.56 $\pm$ 0.10}& 662.34 $\pm$ 7.12\\
pol              & \textbf{0.82$\pm$ 0.0014} & 0.67$\pm$ 0.0006 & \textbf{4.34 $\pm$ 0.05}& 759.33 $\pm$ 4.37\\
MiniBooNE        & \textbf{0.77$\pm$ 0.0058} & 0.63$\pm$ 0.0019 & \textbf{7.46 $\pm$ 0.06}& 1507.83 $\pm$ 14.50\\
jannis           & \textbf{0.83$\pm$ 0.0042} & 0.55$\pm$ 0.0004 & \textbf{7.98 $\pm$ 0.07}& 1753.10 $\pm$ 12.35\\
nomao            & \textbf{0.79$\pm$ 0.0021} & 0.67$\pm$ 0.0009 & \textbf{7.69$\pm$ 0.11}& 2564.58 $\pm$ 23.11\\
vehicle\_sensIT  & \textbf{0.81$\pm$ 0.0014} & 0.61$\pm$ 0.0005 & \textbf{9.87 $\pm$ 0.08}& 3113.65 $\pm$ 24.54\\
gas\_drift       & \textbf{0.86$\pm$ 0.0010} & 0.84$\pm$ 0.0017 & \textbf{11.28$\pm$ 0.10}& 3878.31 $\pm$ 40.72\\
musk             & \textbf{0.88$\pm$ 0.0008} & 0.71$\pm$ 0.0006 & \textbf{14.09 $\pm$ 0.11}& 4415.45 $\pm$ 22.96\\
\midrule
Average          & \textbf{0.83} & 0.67 &  \textbf{6.78}& 1636.80\\
\bottomrule
\end{tabular}
}
\end{center}
\vskip -0.1in
\end{table}

\subsection{Cell fixation experiment} 
\label{sec:cell fixation}

\paragraph{Experimental setting} A naive strategy to handle cell-level outliers is to eliminate data points that contain outliers. This method, however, risks substantial data loss, particularly when outliers are scattered and data points are costly to collect. We instead consider a cell fixation experiment, where we assume that the ground-truth annotations of outlier cells can be restored with external expert knowledge.  
At each step, we ``fix" a certain number of cells by substituting them with their ground-truth annotations, prioritizing cells that have the lowest valuations. Then we fit a logistic model\footnote{Logistic regression is chosen because it is a simple yet powerful machine learning model commonly used to test data separability, which is a standard practice in this field \citep{jiang2023opendataval,liu20232d}.} and evaluate the model's performance with a test set of $3000$ samples. It is important to note that correcting normal cells has no effect, whereas fixing outlier cells is expected to enhance the model's performance. We adopt the same datasets and implementations as in Section \ref{sec:cell-level outlier detection}.

\begin{figure}
  \centering
  \includegraphics[width = \textwidth]{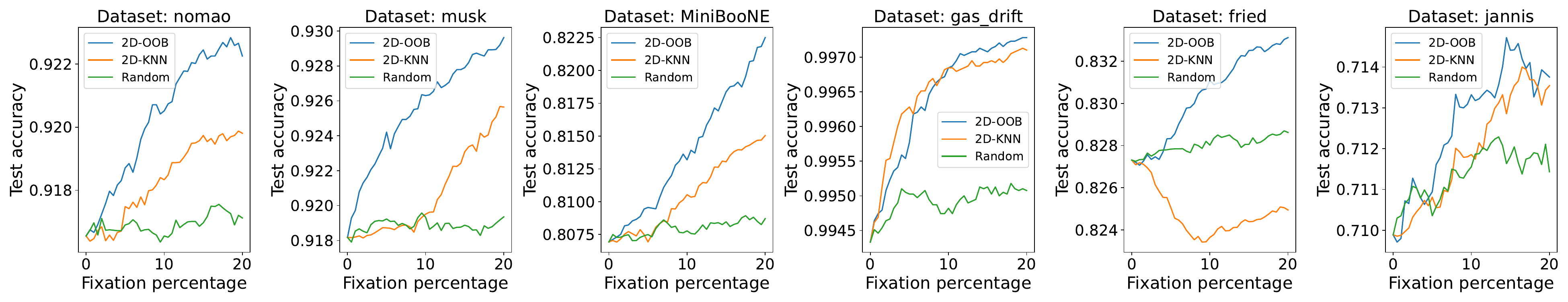}
  \vspace{-6mm}
  \caption{\textbf{Cell fixation experiment results (test accuracy curves) for \texttt{2D-OOB}, \texttt{2D-KNN}, and \texttt{Random}.} We replace cells with their ground-truth annotations, starting with those cells assigned the lowest valuations. The results for $6$ datasets are presented, and additional results for other datasets are available in Appendix \ref{sec:additional fixation}.
  We conduct $30$ independent trials and report the average results. A higher curve indicates better performance. \texttt{2D-OOB} demonstrates a superior capability in accurately identifying and rectifying cell-level outliers.
  }
  \label{Fig:cell_fixation_main}
\end{figure}

\paragraph{Results}  Figure \ref{Fig:cell_fixation_main} illustrates the anticipated trend in the performance of \texttt{2D-OOB}, validating our method's capability to accurately identify and prioritize the most impactful outliers for correction. As cells with the lowest valuations are progressively fixed, \texttt{2D-OOB} demonstrates a consistent improvement in model accuracy. In contrast, when applying the same procedure with \texttt{2D-KNN}, such notable performance enhancements are not observed. 

Additionally, we investigate a scenario where ground-truth annotations remain unavailable. We adopt the setup from \citet{liu20232d}, where we replace the outlier cells with the average of other cells in the same feature column. \texttt{2D-OOB} uniformly demonstrates significant superiority over its counterparts. Results are provided in Appendix \ref{sec:additional fixation}.

\begin{figure}
  \centering
  \includegraphics[width =\textwidth]{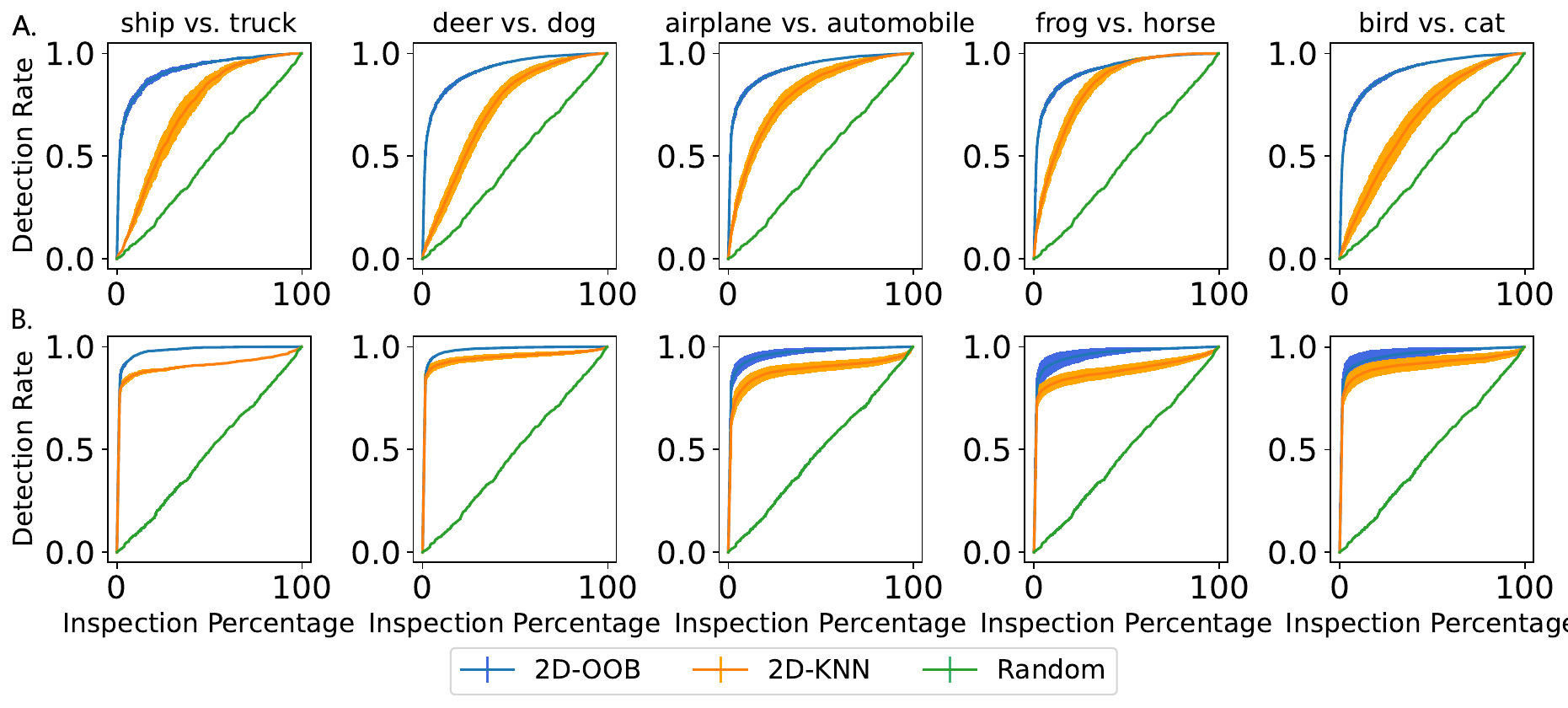}
  \vspace{-3mm}
  \caption{\textbf{Backdoor trigger detection rate curves for \texttt{2D-OOB}, \texttt{2D-KNN}, and \texttt{Random}.} Panels A (top) and B (bottom) correspond to the Trojan square and BadNets square, respectively. We inspect the cells within each poisoned sample in descending order of their valuation scores. The detection rate curve shows the average detection rate across all poisoned samples, with error bars representing a $95\%$ confidence interval based on $15$ independent runs. \texttt{2D-OOB} demonstrates superior performance in detecting the cells implanted with triggers.}
  \label{fig:backdoor_detection_curve}
\end{figure}

\subsection{Backdoor trigger detection} 
\label{sec:backdoor}

A common strategy of data poisoning attacks involves inserting a predefined trigger (e.g., a specific pixel pattern in an image) into a subset of the training data \citep{gu2017badnets, chen2017targeted, Liu2018TrojaningAO}. These malicious manipulations can be challenging to detect as they only infect specific, targeted samples. Even when poisoned data are present, it could be difficult to discern the root cause of the attacks, since manually reviewing the images is expensive and time-consuming. In this experiment, we introduce a novel task enabled by the joint valuation framework: localizing backdoor triggers in data poisoning attacks. Distinct from the random outliers investigated previously, this type of cell contamination is targeted and deliberate. 

\begin{figure}
  \centering
  \includegraphics[width = \textwidth]{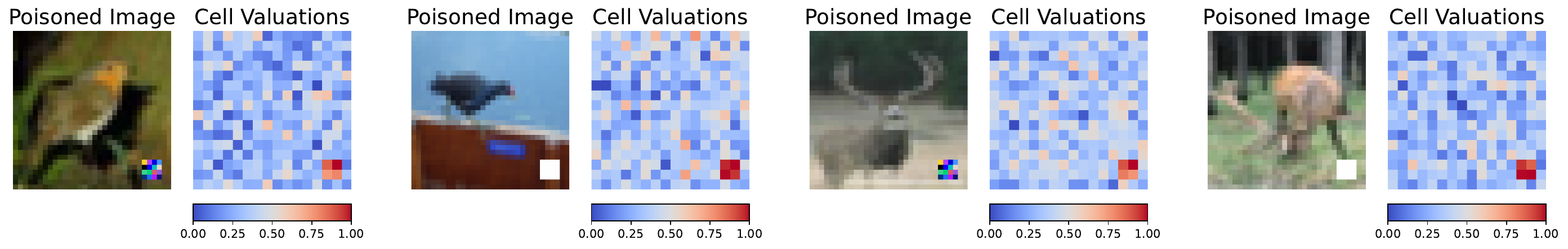}
  \vspace{-6mm}
  \caption{\textbf{Qualitative examples for \texttt{2D-OOB} in the backdoor trigger detection task.} Each pair of images consists of a poisoned image and its corresponding cell valuation heatmap. The color of the heatmap indicates importance, with red cells representing higher importance and blue cells representing lower importance. In the first two pairs, the class `bird'' is relabeled as ``cat'', while in the latter two pairs, the class ``deer'' is relabeled as ``cat''. The heatmaps clearly show that higher cell valuations are predominantly concentrated in the regions containing triggers, while areas featuring actual objects receive lower valuations. This pattern suggests that \texttt{2D-OOB} effectively captures the triggers as the impactful features responsible for the misclassification of the poisoned samples. 
  }
  \label{fig:backdoor_heatmap}
\end{figure}

We consider two popular backdoor attack algorithms: BadNets \citep{gu2017badnets} and Trojan Attack \citep{Liu2018TrojaningAO}. During training, the poisoned samples, relabeled as the adversarial target class, are mixed up with clean data. As a result, the model learns to incorrectly associate the trigger with the target class.
At test time, inputs containing the trigger are misclassified to the target class. In this context, our goal is to effectively pinpoint the triggers by recognizing them as influential features through our joint valuation framework.

\textbf{Experimental setting} We select 5 pairs of classes from CIFAR-10 \citep{krizhevsky2009learning}. For each pair, one class is designated as the target attack class, while the other serves as the source class. The training dataset comprises $1000$ images. For each attack, we contaminate $15\%$ of the training samples from the source class and relabel them to the target class. Two types of attack triggers are implemented: the Trojan square and the BadNets square \citep{gu2017badnets,pang:2022:eurosp, Liu2018TrojaningAO}. These triggers are placed in the lower right corner of the original images to minimize occlusion. Further details about these attacks are available in Appendix \ref{sec:backdoor trigger generation}. In our experiment, the ratio of poisoned cells is approximately $1\%$, and each weak learner in the subset bagging model is built by sampling $25$\% of the features.



\paragraph{Results} We adopt the same detection scheme and baseline methods as in Section~\ref{sec:cell-level outlier detection}. Ideally, the poisoned cells should receive high valuation scores given that such data points have been relabeled. We present the detection rate curves for five datasets in Figure \ref{fig:backdoor_detection_curve}. \texttt{2D-OOB} significantly outperforms \texttt{2D-KNN} in detecting both types of triggers. Overall, \texttt{2D-OOB} achieves an average detection AUC of $0.95$ across all datasets and attack types, compared to $0.83$ for \texttt{2D-KNN}. It is worth noting that conventional data valuation methods are fundamentally unable to localize backdoor triggers; at most, they can only identify poisoned data points.

\paragraph{Qualitative examples} Figure \ref{fig:backdoor_heatmap} displays the heatmaps for poisoned samples based on cell valuations of \texttt{2D-OOB}. Areas with higher cell valuations (marked in dark red) precisely indicate the trigger locations within these samples, demonstrating the effectiveness of our joint valuation framework. Additional examples can be found in Appendix \ref{sec:additional backdoor}.



\subsection{Ablation study}
\label{sec:ablation}
We conduct ablation studies on the cell-level outlier detection task, as outlined in Section \ref{sec:cell-level outlier detection}, to examine the impact of key hyperparameters on \texttt{2D-OOB} estimations, including the selection and number of weak learners, as well as the feature subset ratio.

\paragraph{Selection of weak learners} 
Although our study primarily employs decision trees as weak learners, it is important to note that \texttt{2D-OOB} is \textbf{model-agnostic}, enabling the use of any class of machine learning models as weak learners. Specifically, we examine decision trees, logistic regression, a single-layer MLP with $64$ units, and a two-layer MLP with $64$ and $32$ units, respectively, as weak learners to compare their performance.

Table \ref{tab:ablation_weak_learner_type} presents a comparison of detection AUC across $12$ datasets with different choices of weak learners, indicating that \texttt{2D-OOB} is not model-free. The selection of weak learners \textit{slightly} affects the valuation results, with more complex models generally yielding better performance. Nonetheless, all variations of \texttt{2D-OOB} outperform \texttt{2D-KNN}, highlighting the clear advantages of the \texttt{2D-OOB} approach.

\begin{table}
\caption{\textbf{Ablation study results of weak learner types.} The average and standard error of the detection AUC based on $30$ independent experiments are denoted by ``average $\pm$ standard error''. Results from \texttt{2D-KNN} are included for comparison. The choice of weak learner leads to variations in cell values, yet the performance of the detection task remains robust.
}
\label{tab:ablation_weak_learner_type}
\begin{center}
\resizebox{\textwidth}{!}{
\begin{tabular}{l|ccccc}
\toprule
Dataset& Decision Tree & Logistic Regression & MLP (single-layer) & MLP (two-layer) & 2D-KNN (Baseline)\\
\midrule
lawschool        & \textbf{0.88 $\pm$ 0.0027} & 0.81 $\pm$ 0.0014 & 0.83 $\pm$ 0.0023 & 0.86 $\pm$ 0.0049 & 0.75 $\pm$ 0.0011 \\
electricity      & \textbf{0.77 $\pm$ 0.0072} & 0.75 $\pm$ 0.0029 & 0.75 $\pm$ 0.0039 & 0.74 $\pm$ 0.0064 & 0.68 $\pm$ 0.0014 \\
fried            & \textbf{0.91 $\pm$ 0.0015} & 0.82 $\pm$ 0.0023 & 0.85 $\pm$ 0.0020 & 0.88 $\pm$ 0.0027 & 0.61 $\pm$ 0.0005 \\
2dplanes         & 0.87 $\pm$ 0.0015 & 0.82 $\pm$ 0.0026 & 0.86 $\pm$ 0.0026 & \textbf{0.88 $\pm$ 0.0037} & 0.62 $\pm$ 0.0005 \\
creditcard       & 0.72 $\pm$ 0.0028 & \textbf{0.74 $\pm$ 0.0023} & \textbf{0.74 $\pm$ 0.0026} & \textbf{0.74 $\pm$ 0.0071} & 0.69 $\pm$ 0.0011 \\
pol              & 0.82 $\pm$ 0.0014 & 0.79 $\pm$ 0.0029 & 0.85 $\pm$ 0.0014 & \textbf{0.86 $\pm$ 0.0019} & 0.67 $\pm$ 0.0006 \\
MiniBooNE        & 0.77 $\pm$ 0.0058 & 0.77 $\pm$ 0.0059 & 0.80 $\pm$ 0.0057 & \textbf{0.81 $\pm$ 0.0119} & 0.63 $\pm$ 0.0019 \\
jannis           & \textbf{0.83 $\pm$ 0.0042} & 0.76 $\pm$ 0.0040 & 0.79 $\pm$ 0.0048 & 0.80 $\pm$ 0.0108 & 0.55 $\pm$ 0.0004 \\
nomao            & 0.79 $\pm$ 0.0021 & 0.82 $\pm$ 0.0012 & \textbf{0.83 $\pm$ 0.0010} & \textbf{0.83 $\pm$ 0.0017} & 0.67 $\pm$ 0.0009 \\
vehicle-sensIT   & 0.81 $\pm$ 0.0014 & 0.81 $\pm$ 0.0026 & 0.80 $\pm$ 0.0025 & \textbf{0.82 $\pm$ 0.0037} & 0.61 $\pm$ 0.0005 \\
gas-drift        & 0.86 $\pm$ 0.0010 & \textbf{0.89 $\pm$ 0.0005} & 0.88 $\pm$ 0.0005 & 0.88 $\pm$ 0.0006 & 0.84 $\pm$ 0.0017 \\
musk             & 0.88 $\pm$ 0.0008 & 0.87 $\pm$ 0.0005 & \textbf{0.88 $\pm$ 0.0005} & \textbf{0.88 $\pm$ 0.0008} & 0.71 $\pm$ 0.0006\\
\midrule
Average & \textbf{0.83} & 0.80 & 0.82 & \textbf{0.83} & 0.67 \\
\bottomrule
\end{tabular}
}
\end{center}
\vskip -0.1in
\end{table}

\paragraph{Number of weak learners}
Increasing the number of weak learners allows for a greater number of data-feature subset pairs to be explored, potentially leading to more accurate estimates. However, as shown in Table \ref{tab:ablation_B},
when we increase the number of base learners from $500$ to $3000$, the detection AUC for each dataset remains relatively unchanged, suggesting convergence of the estimation beyond a certain threshold. Typically, $1000$ base learners are sufficient to achieve an equitable joint valuation.

\paragraph{Feature subset ratio $K/d$} The feature subset ratio $K/d$ refers to the fraction of the total number of features $d$ that are randomly selected to build each weak learner, where $K$ is the number of selected features. In previous experiments, we used a fixed ratio of $0.50$ (unless otherwise specified). To further investigate the impact of this ratio, we now test two additional values: $0.25$ and $0.75$. The results in Table \ref{tab:ablation_K} suggest that in general, the joint valuation capacity of our method is robust to the choice of feature subset ratio. 

\begin{table*}[t]
\caption{\textbf{Ablation study results of (a) the number of base learners \( B \) and (b) the feature subset ratio \( K/d \).} The average and standard error of the detection AUC based on $30$ independent runs are denoted by ``average $\pm$ standard error.'' (a) Increasing the number of base learners from $1000$ to $3000$ does not yield a notable performance improvement. (b) Our method's joint valuation capacity remains relatively stable regardless of the selected feature subset ratio.}
\label{tab:combined_ablation}
\centering
\resizebox{\textwidth}{!}{
\begin{tabular}{cc}
    \begin{subtable}[t]{0.5\textwidth}
    \caption{\textbf{Ablation on the number of base learners $B$.} }
\label{tab:ablation_B}
\resizebox{\linewidth}{!}{
\begin{tabular}{l|c|c|c}
\toprule
\multirow{2}{*}{Dataset}  & \multicolumn{2}{c}{\textbf{AUC} $\uparrow$} \\
& $B = 500$ &  $B = 1000$ &  $B= 3000$ \\
\midrule
lawschool        & 0.86 $\pm$ 0.0035    & 0.88 $\pm$ 0.0027     & 0.88 $\pm$ 0.0026     \\ 
electricity      & 0.77 $\pm$ 0.0062    & 0.77 $\pm$ 0.0072     & 0.77 $\pm$ 0.0070     \\ 
fried            & 0.87 $\pm$ 0.0022    & 0.91 $\pm$ 0.0015     & 0.91 $\pm$ 0.0014     \\ 
2dplanes         & 0.87 $\pm$ 0.0016    & 0.87 $\pm$ 0.0015     & 0.87 $\pm$ 0.0015     \\ 
creditcard       & 0.72 $\pm$ 0.0025    & 0.72 $\pm$ 0.0028     & 0.72 $\pm$ 0.0028     \\ 
pol              & 0.78 $\pm$ 0.0022    & 0.82 $\pm$ 0.0014     & 0.82 $\pm$ 0.0014     \\ 
MiniBooNE        & 0.77 $\pm$ 0.0042    & 0.77 $\pm$ 0.0058     & 0.77 $\pm$ 0.0058     \\ 
jannis           & 0.78 $\pm$ 0.0045    & 0.83 $\pm$ 0.0042     & 0.83 $\pm$ 0.0039     \\ 
nomao            & 0.79 $\pm$ 0.0018    & 0.79 $\pm$ 0.0021     & 0.79 $\pm$ 0.0020     \\ 
vehicle\_sensIT   & 0.80 $\pm$ 0.0021    & 0.81 $\pm$ 0.0014     & 0.81 $\pm$ 0.0014     \\ 
gas\_drift        & 0.86 $\pm$ 0.0007    & 0.86 $\pm$ 0.0010     & 0.86 $\pm$ 0.0010     \\ 
musk             & 0.88 $\pm$ 0.0008    & 0.88 $\pm$ 0.0008     & 0.88 $\pm$ 0.0008\\
\midrule
Average       &0.81   & \textbf{0.83}  & \textbf{0.83} \\
\bottomrule
\end{tabular}
}
    \end{subtable}
    &
    \begin{subtable}[t]{0.5\textwidth}
        \centering
\caption{\textbf{Ablation on feature subset ratio $K/d$.} }
\label{tab:ablation_K}
\resizebox{\linewidth}{!}{
\begin{tabular}{l|c|c|c}
\toprule
\multirow{2}{*}{Dataset}  & \multicolumn{3}{c}{\textbf{AUC} $\uparrow$} \\
& $K/d = 0.25$ & $K/d = 0.50$ & $K/d = 0.75$ \\
\midrule
lawschool        & 0.86$\pm$ 0.0026 & 0.88$\pm$ 0.0027 & 0.88$\pm$ 0.0024 \\
electricity      & 0.79$\pm$ 0.0070 & 0.77$\pm$ 0.0072 & 0.73$\pm$ 0.0070 \\
fried            & 0.86$\pm$ 0.0024 & 0.91$\pm$ 0.0015 & 0.89$\pm$ 0.0007 \\
2dplanes         & 0.82$\pm$ 0.0015 & 0.87$\pm$ 0.0015 & 0.88$\pm$ 0.0014 \\
creditcard       & 0.73$\pm$ 0.0029 & 0.72$\pm$ 0.0028 & 0.71$\pm$ 0.0028 \\
pol              & 0.66$\pm$ 0.0031 & 0.82$\pm$ 0.0014 & 0.82$\pm$ 0.0014 \\
MiniBooNE        & 0.78$\pm$ 0.0076 & 0.77$\pm$ 0.0058 & 0.77$\pm$ 0.0049 \\
jannis           & 0.84$\pm$ 0.0035 & 0.83$\pm$ 0.0042 & 0.82$\pm$ 0.0043 \\
nomao            & 0.79$\pm$ 0.0019 & 0.79$\pm$ 0.0021 & 0.78$\pm$ 0.0021 \\
vehicle\_sensIT  & 0.81$\pm$ 0.0014 & 0.81$\pm$ 0.0014 & 0.80$\pm$ 0.0015 \\
gas\_drift       & 0.88$\pm$ 0.0009 & 0.86$\pm$ 0.0010 & 0.86$\pm$ 0.0009 \\
musk             & 0.89$\pm$ 0.0008 & 0.88$\pm$ 0.0008 & 0.88$\pm$ 0.0008 \\
\midrule
Average          & 0.81 & \textbf{0.83} & 0.82 \\
\bottomrule
\end{tabular}
}
    \end{subtable}
\end{tabular}
}
\end{table*}

Apart from the experiments discussed above, we showcase that marginalization of \texttt{2D-OOB} can either match or surpass state-of-the-art data valuation methods on standard benchmarks in Appendix \ref{sec:data val exp}. 

\section{Related work}
\paragraph{Data contribution estimation} 
In addition to the marginal contribution-based methods discussed in Section \ref{sec: marginal}, many other approaches are emerging in the area of data valuation. \citet{just2023lava} develop a non-conventional class-wise Wasserstein distance between the training and validation sets and use the gradient information to evaluate each data point.
\citet{wu2022davinz} extend data valuation to deep neural networks, introducing a training-free data valuation framework based on neural tangent kernel theory. 
\citet{yoon2020data} leverage reinforcement learning techniques to automatically learn data valuation scores by training a regression model.
However, all these data valuation methods do not assign importance scores to cells, whereas our method provides additional insights into how individual cells contribute to the data valuations.

\paragraph{Feature attribution} 
Feature attribution is a pivotal research domain in explainable machine learning that primarily aims to provide insights into how individual features influence model predictions. Various effective methods have been proposed, including SHAP-based explanation \citep{lundberg2017unified,lundberg2018consistent,kwon2022weightedshap,covert2020understanding,covert2021improving}, counterfactual explanation \citep{wachter2017counterfactual,joshi2019towards,poyiadzi2020face,mahajan2019preserving,mothilal2020explaining}, and backpropagation-based explanation \citep{ancona2017towards,bach2015pixel,springenberg2014striving,simonyan2013deep,zeiler2014visualizing}. Among these methods, the SHAP-based explanation stands out as the most widely adopted approach, utilizing cooperative game theory principles to compute the Shapley value \citep{shapley1953value}. 
While feature attribution offers a potential method to attribute data valuation scores across individual cells, our empirical experiments in Appendix \ref{sec: two-stage} reveal that this two-stage scheme falls short in efficacy compared to our proposed joint valuation paradigm, which integrates data valuation and feature attribution in a simultaneous process.

\section{Conclusion}
\label{sec:conclusion}
We propose \texttt{2D-OOB}, an efficient joint valuation framework that assigns a score to each cell in a dataset, thereby facilitating finer attribution of data contributions and enabling a deeper understanding of the dataset. Through comprehensive experiments, we show that \texttt{2D-OOB} is computationally efficient and competitive over state-of-the-art methods in multiple joint valuation use cases. 

\paragraph{Discussion} We emphasize that the primary objective of the joint valuation framework is to evaluate the quality of cells within the dataset, rather than to optimize model performance. The model used serves as a proxy for this evaluation, and it is important to note that a high-performing machine learning method does not necessarily ensure a justified valuation framework.

\paragraph{Limitation and future work}While our study primarily explores random forest models applied to tabular datasets and simple image datasets, the potential application of neural network models within the \texttt{2D-OOB} framework for more complex vision and language tasks presents a promising avenue for future investigation. For instance, in text datasets, tokens or words can be treated as cells. \texttt{2D-OOB} can be easily integrated into any bagging training scheme that uses language models.

Overall, we believe that our work will inspire further exploration in the field of joint valuation, with the broader goal of improving the transparency and interpretability of machine learning, as well as developing an equitable incentive mechanism for data sharing.

\section*{Acknowledgement}
We acknowledge computing resources from Columbia University's Shared Research Computing Facility project, which is supported by NIH Research Facility Improvement Grant 1G20RR030893-01, and associated funds from the New York State Empire State Development, Division of Science Technology and Innovation (NYSTAR) Contract C090171, both awarded April 15, 2010.


\newpage

\bibliographystyle{plainnat}
\small\bibliography{ref}

\begin{thebibliography}{53}
\providecommand{\natexlab}[1]{#1}
\providecommand{\url}[1]{\texttt{#1}}
\expandafter\ifx\csname urlstyle\endcsname\relax
  \providecommand{\doi}[1]{doi: #1}\else
  \providecommand{\doi}{doi: \begingroup \urlstyle{rm}\Url}\fi

\bibitem[Ancona et~al.(2017)Ancona, Ceolini, {\"O}ztireli, and Gross]{ancona2017towards}
Marco Ancona, Enea Ceolini, Cengiz {\"O}ztireli, and Markus Gross.
\newblock Towards better understanding of gradient-based attribution methods for deep neural networks.
\newblock \emph{arXiv preprint arXiv:1711.06104}, 2017.

\bibitem[Bach et~al.(2015)Bach, Binder, Montavon, Klauschen, M{\"u}ller, and Samek]{bach2015pixel}
Sebastian Bach, Alexander Binder, Gr{\'e}goire Montavon, Frederick Klauschen, Klaus-Robert M{\"u}ller, and Wojciech Samek.
\newblock On pixel-wise explanations for non-linear classifier decisions by layer-wise relevance propagation.
\newblock \emph{PloS one}, 10\penalty0 (7):\penalty0 e0130140, 2015.

\bibitem[Bleiholder and Naumann(2009)]{bleiholder2009data}
Jens Bleiholder and Felix Naumann.
\newblock Data fusion.
\newblock \emph{ACM computing surveys (CSUR)}, 41\penalty0 (1):\penalty0 1--41, 2009.

\bibitem[Breiman(2001)]{breiman2001random}
Leo Breiman.
\newblock Random forests.
\newblock \emph{Machine learning}, 45:\penalty0 5--32, 2001.

\bibitem[Chen et~al.(2017)Chen, Liu, Li, Lu, and Song]{chen2017targeted}
Xinyun Chen, Chang Liu, Bo~Li, Kimberly Lu, and Dawn Song.
\newblock Targeted backdoor attacks on deep learning systems using data poisoning.
\newblock \emph{arXiv preprint arXiv:1712.05526}, 2017.

\bibitem[Covert and Lee(2021)]{covert2021improving}
Ian Covert and Su-In Lee.
\newblock Improving kernelshap: Practical shapley value estimation using linear regression.
\newblock In \emph{International Conference on Artificial Intelligence and Statistics}, pages 3457--3465. PMLR, 2021.

\bibitem[Covert et~al.(2020)Covert, Lundberg, and Lee]{covert2020understanding}
Ian Covert, Scott~M Lundberg, and Su-In Lee.
\newblock Understanding global feature contributions with additive importance measures.
\newblock \emph{Advances in Neural Information Processing Systems}, 33:\penalty0 17212--17223, 2020.

\bibitem[Du et~al.(2022)Du, Wang, Cai, and Li]{du2022vos}
Xuefeng Du, Zhaoning Wang, Mu~Cai, and Yixuan Li.
\newblock Vos: Learning what you don't know by virtual outlier synthesis.
\newblock \emph{arXiv preprint arXiv:2202.01197}, 2022.

\bibitem[Feldman and Zhang(2020)]{feldman2020neural}
Vitaly Feldman and Chiyuan Zhang.
\newblock What neural networks memorize and why: Discovering the long tail via influence estimation.
\newblock \emph{Advances in Neural Information Processing Systems}, 33:\penalty0 2881--2891, 2020.

\bibitem[Fernandez et~al.(2020)Fernandez, Subramaniam, and Franklin]{fernandez2020data}
Raul~Castro Fernandez, Pranav Subramaniam, and Michael~J Franklin.
\newblock Data market platforms: Trading data assets to solve data problems.
\newblock \emph{arXiv preprint arXiv:2002.01047}, 2020.

\bibitem[Feurer et~al.(2021)Feurer, Van~Rijn, Kadra, Gijsbers, Mallik, Ravi, M{\"u}ller, Vanschoren, and Hutter]{feurer2021openml}
Matthias Feurer, Jan~N Van~Rijn, Arlind Kadra, Pieter Gijsbers, Neeratyoy Mallik, Sahithya Ravi, Andreas M{\"u}ller, Joaquin Vanschoren, and Frank Hutter.
\newblock Openml-python: an extensible python api for openml.
\newblock \emph{The Journal of Machine Learning Research}, 22\penalty0 (1):\penalty0 4573--4577, 2021.

\bibitem[Ghorbani and Zou(2019)]{ghorbani2019data}
Amirata Ghorbani and James Zou.
\newblock Data shapley: Equitable valuation of data for machine learning.
\newblock In \emph{International Conference on Machine Learning}, pages 2242--2251. PMLR, 2019.

\bibitem[Gu et~al.(2017)Gu, Dolan-Gavitt, and Garg]{gu2017badnets}
Tianyu Gu, Brendan Dolan-Gavitt, and Siddharth Garg.
\newblock Badnets: Identifying vulnerabilities in the machine learning model supply chain.
\newblock \emph{arXiv preprint arXiv:1708.06733}, 2017.

\bibitem[Ho(1995)]{ho1995random}
Tin~Kam Ho.
\newblock Random decision forests.
\newblock In \emph{Proceedings of 3rd international conference on document analysis and recognition}, volume~1, pages 278--282. IEEE, 1995.

\bibitem[Jia et~al.(2019)Jia, Dao, Wang, Hubis, Gurel, Li, Zhang, Spanos, and Song]{jia2019efficient}
Ruoxi Jia, David Dao, Boxin Wang, Frances~Ann Hubis, Nezihe~Merve Gurel, Bo~Li, Ce~Zhang, Costas~J Spanos, and Dawn Song.
\newblock Efficient task-specific data valuation for nearest neighbor algorithms.
\newblock \emph{arXiv preprint arXiv:1908.08619}, 2019.

\bibitem[Jiang et~al.(2023)Jiang, Liang, Zou, and Kwon]{jiang2023opendataval}
Kevin~Fu Jiang, Weixin Liang, James Zou, and Yongchan Kwon.
\newblock Opendataval: a unified benchmark for data valuation.
\newblock \emph{arXiv preprint arXiv:2306.10577}, 2023.

\bibitem[Joshi et~al.(2019)Joshi, Koyejo, Vijitbenjaronk, Kim, and Ghosh]{joshi2019towards}
Shalmali Joshi, Oluwasanmi Koyejo, Warut Vijitbenjaronk, Been Kim, and Joydeep Ghosh.
\newblock Towards realistic individual recourse and actionable explanations in black-box decision making systems.
\newblock \emph{arXiv preprint arXiv:1907.09615}, 2019.

\bibitem[Just et~al.(2023)Just, Kang, Wang, Zeng, Ko, Jin, and Jia]{just2023lava}
Hoang~Anh Just, Feiyang Kang, Jiachen~T Wang, Yi~Zeng, Myeongseob Ko, Ming Jin, and Ruoxi Jia.
\newblock Lava: Data valuation without pre-specified learning algorithms.
\newblock \emph{arXiv preprint arXiv:2305.00054}, 2023.

\bibitem[Kelly et~al.(2017)Kelly, Longjohn, and Nottingham]{kelly_longjohn_nottingham}
Markelle Kelly, Rachel Longjohn, and Kolby Nottingham.
\newblock \url{https://archive.ics.uci.edu}, 2017.
\newblock The UCI Machine Learning Repository.

\bibitem[Koh and Liang(2017)]{koh2017understanding}
Pang~Wei Koh and Percy Liang.
\newblock Understanding black-box predictions via influence functions.
\newblock In \emph{International conference on machine learning}, pages 1885--1894. PMLR, 2017.

\bibitem[Kriegel et~al.(2009)Kriegel, Kr{\"o}ger, Schubert, and Zimek]{kriegel2009outlier}
Hans-Peter Kriegel, Peer Kr{\"o}ger, Erich Schubert, and Arthur Zimek.
\newblock Outlier detection in axis-parallel subspaces of high dimensional data.
\newblock In \emph{Advances in Knowledge Discovery and Data Mining: 13th Pacific-Asia Conference, PAKDD 2009 Bangkok, Thailand, April 27-30, 2009 Proceedings 13}, pages 831--838. Springer, 2009.

\bibitem[Krizhevsky et~al.(2009)Krizhevsky, Hinton, et~al.]{krizhevsky2009learning}
Alex Krizhevsky, Geoffrey Hinton, et~al.
\newblock Learning multiple layers of features from tiny images.
\newblock 2009.

\bibitem[Kwon and Zou(2021)]{kwon2021beta}
Yongchan Kwon and James Zou.
\newblock Beta shapley: a unified and noise-reduced data valuation framework for machine learning.
\newblock \emph{arXiv preprint arXiv:2110.14049}, 2021.

\bibitem[Kwon and Zou(2023)]{kwon2023data}
Yongchan Kwon and James Zou.
\newblock Data-{OOB}: Out-of-bag estimate as a simple and efficient data value.
\newblock In Andreas Krause, Emma Brunskill, Kyunghyun Cho, Barbara Engelhardt, Sivan Sabato, and Jonathan Scarlett, editors, \emph{Proceedings of the 40th International Conference on Machine Learning}, volume 202 of \emph{Proceedings of Machine Learning Research}, pages 18135--18152. PMLR, 23--29 Jul 2023.
\newblock URL \url{https://proceedings.mlr.press/v202/kwon23e.html}.

\bibitem[Kwon and Zou(2022)]{kwon2022weightedshap}
Yongchan Kwon and James~Y Zou.
\newblock Weightedshap: analyzing and improving shapley based feature attributions.
\newblock \emph{Advances in Neural Information Processing Systems}, 35:\penalty0 34363--34376, 2022.

\bibitem[Kwon et~al.(2021)Kwon, A.~Rivas, and Zou]{pmlr-v130-kwon21a}
Yongchan Kwon, Manuel A.~Rivas, and James Zou.
\newblock Efficient computation and analysis of distributional shapley values.
\newblock In Arindam Banerjee and Kenji Fukumizu, editors, \emph{Proceedings of The 24th International Conference on Artificial Intelligence and Statistics}, volume 130 of \emph{Proceedings of Machine Learning Research}, pages 793--801. PMLR, 13--15 Apr 2021.
\newblock URL \url{https://proceedings.mlr.press/v130/kwon21a.html}.

\bibitem[Leung et~al.(2016)Leung, Zhang, and Zamar]{leung2016robust}
Andy Leung, Hongyang Zhang, and Ruben Zamar.
\newblock Robust regression estimation and inference in the presence of cellwise and casewise contamination.
\newblock \emph{Computational Statistics \& Data Analysis}, 99:\penalty0 1--11, 2016.

\bibitem[Liang et~al.(2022)Liang, Tadesse, Ho, Fei-Fei, Zaharia, Zhang, and Zou]{liang2022advances}
Weixin Liang, Girmaw~Abebe Tadesse, Daniel Ho, L~Fei-Fei, Matei Zaharia, Ce~Zhang, and James Zou.
\newblock Advances, challenges and opportunities in creating data for trustworthy ai.
\newblock \emph{Nature Machine Intelligence}, 4\penalty0 (8):\penalty0 669--677, 2022.

\bibitem[Liu et~al.(2018)Liu, Ma, Aafer, Lee, Zhai, Wang, and Zhang]{Liu2018TrojaningAO}
Yingqi Liu, Shiqing Ma, Yousra Aafer, Wen-Chuan Lee, Juan Zhai, Weihang Wang, and X.~Zhang.
\newblock Trojaning attack on neural networks.
\newblock In \emph{Network and Distributed System Security Symposium}, 2018.
\newblock URL \url{https://api.semanticscholar.org/CorpusID:31806516}.

\bibitem[Liu et~al.(2023)Liu, Just, Chang, Chen, and Jia]{liu20232d}
Zhihong Liu, Hoang~Anh Just, Xiangyu Chang, Xi~Chen, and Ruoxi Jia.
\newblock 2d-shapley: A framework for fragmented data valuation.
\newblock \emph{arXiv preprint arXiv:2306.10473}, 2023.

\bibitem[Lundberg and Lee(2017)]{lundberg2017unified}
Scott~M Lundberg and Su-In Lee.
\newblock A unified approach to interpreting model predictions.
\newblock \emph{Advances in neural information processing systems}, 30, 2017.

\bibitem[Lundberg et~al.(2018)Lundberg, Erion, and Lee]{lundberg2018consistent}
Scott~M Lundberg, Gabriel~G Erion, and Su-In Lee.
\newblock Consistent individualized feature attribution for tree ensembles.
\newblock \emph{arXiv preprint arXiv:1802.03888}, 2018.

\bibitem[Mahajan et~al.(2019)Mahajan, Tan, and Sharma]{mahajan2019preserving}
Divyat Mahajan, Chenhao Tan, and Amit Sharma.
\newblock Preserving causal constraints in counterfactual explanations for machine learning classifiers.
\newblock \emph{arXiv preprint arXiv:1912.03277}, 2019.

\bibitem[Mothilal et~al.(2020)Mothilal, Sharma, and Tan]{mothilal2020explaining}
Ramaravind~K Mothilal, Amit Sharma, and Chenhao Tan.
\newblock Explaining machine learning classifiers through diverse counterfactual explanations.
\newblock In \emph{Proceedings of the 2020 conference on fairness, accountability, and transparency}, pages 607--617, 2020.

\bibitem[Nicolae et~al.(2018)Nicolae, Sinn, Tran, Buesser, Rawat, Wistuba, Zantedeschi, Baracaldo, Chen, Ludwig, Molloy, and Edwards]{art2018}
Maria-Irina Nicolae, Mathieu Sinn, Minh~Ngoc Tran, Beat Buesser, Ambrish Rawat, Martin Wistuba, Valentina Zantedeschi, Nathalie Baracaldo, Bryant Chen, Heiko Ludwig, Ian Molloy, and Ben Edwards.
\newblock Adversarial robustness toolbox v1.2.0.
\newblock \emph{CoRR}, 1807.01069, 2018.
\newblock URL \url{https://arxiv.org/pdf/1807.01069}.

\bibitem[Pang et~al.(2022)Pang, Zhang, Gao, Xi, Ji, Cheng, and Wang]{pang:2022:eurosp}
Ren Pang, Zheng Zhang, Xiangshan Gao, Zhaohan Xi, Shouling Ji, Peng Cheng, and Ting Wang.
\newblock Trojanzoo: Towards unified, holistic, and practical evaluation of neural backdoors.
\newblock In \emph{Proceedings of IEEE European Symposium on Security and Privacy (Euro S\&P)}, 2022.

\bibitem[Poyiadzi et~al.(2020)Poyiadzi, Sokol, Santos-Rodriguez, De~Bie, and Flach]{poyiadzi2020face}
Rafael Poyiadzi, Kacper Sokol, Raul Santos-Rodriguez, Tijl De~Bie, and Peter Flach.
\newblock Face: feasible and actionable counterfactual explanations.
\newblock In \emph{Proceedings of the AAAI/ACM Conference on AI, Ethics, and Society}, pages 344--350, 2020.

\bibitem[Rousseeuw and Bossche(2018)]{rousseeuw2018detecting}
Peter~J Rousseeuw and Wannes Van~Den Bossche.
\newblock Detecting deviating data cells.
\newblock \emph{Technometrics}, 60\penalty0 (2):\penalty0 135--145, 2018.

\bibitem[Shapley et~al.(1953)]{shapley1953value}
Lloyd~S Shapley et~al.
\newblock A value for n-person games.
\newblock 1953.

\bibitem[Sim et~al.(2022)Sim, Xu, and Low]{sim2022data}
Rachael Hwee~Ling Sim, Xinyi Xu, and Bryan Kian~Hsiang Low.
\newblock Data valuation in machine learning:“ingredients”, strategies, and open challenges.
\newblock In \emph{Proc. IJCAI}, 2022.

\bibitem[Sim et~al.(2023)Sim, Zhang, Hoang, Xu, Low, and Jaillet]{sim2023incentives}
Rachael Hwee~Ling Sim, Yehong Zhang, Trong~Nghia Hoang, Xinyi Xu, Bryan Kian~Hsiang Low, and Patrick Jaillet.
\newblock Incentives in private collaborative machine learning.
\newblock In \emph{Thirty-seventh Conference on Neural Information Processing Systems}, 2023.

\bibitem[Simonyan et~al.(2013)Simonyan, Vedaldi, and Zisserman]{simonyan2013deep}
Karen Simonyan, Andrea Vedaldi, and Andrew Zisserman.
\newblock Deep inside convolutional networks: Visualising image classification models and saliency maps.
\newblock \emph{arXiv preprint arXiv:1312.6034}, 2013.

\bibitem[Springenberg et~al.(2014)Springenberg, Dosovitskiy, Brox, and Riedmiller]{springenberg2014striving}
Jost~Tobias Springenberg, Alexey Dosovitskiy, Thomas Brox, and Martin Riedmiller.
\newblock Striving for simplicity: The all convolutional net.
\newblock \emph{arXiv preprint arXiv:1412.6806}, 2014.

\bibitem[Su et~al.(2023)Su, Tarr, and Muller]{su2023robust}
Peng Su, Garth Tarr, and Samuel Muller.
\newblock Robust variable selection under cellwise contamination.
\newblock \emph{Journal of Statistical Computation and Simulation}, pages 1--17, 2023.

\bibitem[Wachter et~al.(2017)Wachter, Mittelstadt, and Russell]{wachter2017counterfactual}
Sandra Wachter, Brent Mittelstadt, and Chris Russell.
\newblock Counterfactual explanations without opening the black box: Automated decisions and the gdpr.
\newblock \emph{Harv. JL \& Tech.}, 31:\penalty0 841, 2017.

\bibitem[Wang et~al.(2024{\natexlab{a}})Wang, Mittal, and Jia]{wang2024efficient}
Jiachen~T Wang, Prateek Mittal, and Ruoxi Jia.
\newblock Efficient data shapley for weighted nearest neighbor algorithms.
\newblock \emph{arXiv preprint arXiv:2401.11103}, 2024{\natexlab{a}}.

\bibitem[Wang et~al.(2024{\natexlab{b}})Wang, Yang, Zou, Kwon, and Jia]{wang2024rethinking}
Jiachen~T Wang, Tianji Yang, James Zou, Yongchan Kwon, and Ruoxi Jia.
\newblock Rethinking data shapley for data selection tasks: Misleads and merits.
\newblock \emph{arXiv preprint arXiv:2405.03875}, 2024{\natexlab{b}}.

\bibitem[Wang and Jia(2022)]{wang2022data}
Tianhao Wang and Ruoxi Jia.
\newblock Data banzhaf: A data valuation framework with maximal robustness to learning stochasticity.
\newblock \emph{arXiv preprint arXiv:2205.15466}, 2022.

\bibitem[Wu et~al.(2022)Wu, Shu, and Low]{wu2022davinz}
Zhaoxuan Wu, Yao Shu, and Bryan Kian~Hsiang Low.
\newblock Davinz: Data valuation using deep neural networks at initialization.
\newblock In \emph{International Conference on Machine Learning}, pages 24150--24176. PMLR, 2022.

\bibitem[Yoon et~al.(2020)Yoon, Arik, and Pfister]{yoon2020data}
Jinsung Yoon, Sercan Arik, and Tomas Pfister.
\newblock Data valuation using reinforcement learning.
\newblock In \emph{International Conference on Machine Learning}, pages 10842--10851. PMLR, 2020.

\bibitem[Zeiler and Fergus(2014)]{zeiler2014visualizing}
Matthew~D Zeiler and Rob Fergus.
\newblock Visualizing and understanding convolutional networks.
\newblock In \emph{Computer Vision--ECCV 2014: 13th European Conference, Zurich, Switzerland, September 6-12, 2014, Proceedings, Part I 13}, pages 818--833. Springer, 2014.

\bibitem[Zhang et~al.(2023)Zhang, Sun, Liu, Xiong, Pei, and Ren]{10.1145/3588728}
Jiayao Zhang, Qiheng Sun, Jinfei Liu, Li~Xiong, Jian Pei, and Kui Ren.
\newblock Efficient sampling approaches to shapley value approximation.
\newblock \emph{Proc. ACM Manag. Data}, 1\penalty0 (1), may 2023.
\newblock \doi{10.1145/3588728}.
\newblock URL \url{https://doi.org/10.1145/3588728}.

\bibitem[Zhao et~al.(2023)Zhao, Lyu, Fernandez, and Kolar]{zhao2023addressing}
Boxin Zhao, Boxiang Lyu, Raul~Castro Fernandez, and Mladen Kolar.
\newblock Addressing budget allocation and revenue allocation in data market environments using an adaptive sampling algorithm.
\newblock \emph{arXiv preprint arXiv:2306.02543}, 2023.

\end{thebibliography}








\newpage
\appendix
\normalsize

\section*{Supplementary Materials}
In the supplementary materials, we provide implementation details, additional experimental results, rigorous formalized proofs and data valuation experiment results. Code repository can be found at \url{https://github.com/yifansun99/2D-OOB-Joint-Valuation}.

\section{Implementation details}
\label{sec: implementation_details}
\subsection{Datasets}
\label{sec:datasets}
\paragraph{Tabular datasets} We use $12$ binary classification datasets obtained from OpenML \citep{feurer2021openml}. A summary of all the datasets is provided in Table \ref{datasummery}. These datasets are used in Section \ref{sec:cell-level outlier detection},~\ref{sec:cell fixation},~\ref{sec:ablation}, and Appendix~\ref{sec:data val exp}.

For each dataset, we first employ a standard normalization procedure, where each feature is normalized to have zero mean and unit standard deviation. After preprocessing, we randomly partition a subset of the data into two non-overlapping sets: a training dataset and a test dataset, which consists of $1000$ and $3000$ samples respectively. The training dataset is used to obtain the joint (or marginal) valuation for each cell (or data point). The test dataset is exclusively used for the cell fixation (or point removal) experiment when evaluating the test accuracy. Note that for methods that need a validation dataset such as \texttt{KNNShapley} and \texttt{DataShapley}, we additionally sample a separate validation dataset (disjoint from training dataset and test dataset) to evaluate the utility function. The size of the validation dataset is set to $10\%$ of the training sample size.

\paragraph{Image datasets} We create datasets by pairing CIFAR-10 classes, each pair consisting of a target attack class and a source class. The training and test dataset comprises $1000$ and $2000$ samples, respectively. The size of the validation dataset is set to $10\%$ of the training sample size. To manage the computational challenges posed by the baseline method, we employ the super-pixel technique to transform the ($32$,$32$,$3$) image into a $256$-dimensional vector. Specifically, we first average the pixel values across the three channels for each pixel. Then, we partition these transformed images into equal-sized $2 \times 2$ grids. Average pooling is applied within each grid to reduce pixel values to a single cell value, which is then arranged into a flattened input vector. A cell is annotated as poisoned if at least $25\%$ of its corresponding grid area contains the trigger.

\subsection{Implementation details for different methods}
\label{sec:implementation details}
\paragraph{\texttt{2D-OOB}} \texttt{2D-OOB} involves fitting a \textit{subset} random forest model with $B=1000$ decision trees based on the package ``scikit-learn''. When constructing each decision tree, we fix the feature subset size ratio as $0.5$ (unless otherwise specified). For Section \ref{sec:backdoor} and Appendix \ref{sec:data val exp}, we simply adopt $T(y_i, \hat{f}(x_{i, S_b})) = \mathds{1}(y_i = \hat{f}(x_{i, S_b}))$. For Section \ref{sec:cell-level outlier detection} and \ref{sec:cell fixation}, we further calculate the normalized negative $L_2$ distance between covariates and the class-specific mean in the bootstrap dataset, denoted as $d_{norm}(x_i,y_i)$. Then we use $T(y_i, \hat{f}(x_{i, S_b})) = \mathds{1}(y_i = \hat{f}(x_{i, S_b})) + d_{norm}(x_i,y_i)$.

\paragraph{\texttt{2D-KNN}} \texttt{2D-KNN} employs KNN as a surrogate model to approximate \texttt{2D-Shapley}. We set the number of nearest neighbors as $10$ and the number of permutations as $1000$. The hyperparameters are selected based on convergence behavior, and the run time is measured until the values converge.

\subsection{Implementation details for cell-level outlier generation}
\label{sec:cell-level outlier generation}

Following \citet{du2022vos} and \citet{liu20232d}, we replace a given cell with an outlier value. Here, the outlier value is randomly generated from the two-sided ``tails'' of the Gaussian distribution fitted to the column's mean and standard deviation, where the probability of the two-sided tail area is set to be $1$\%. In total, $4\%$ ($20\% \times 20\%$) of the cells are replaced with the corresponding outlier values.

\subsection{Implementation details for backdoor trigger generation}
\label{sec:backdoor trigger generation}

Following the prior work \citep{gu2017badnets,Liu2018TrojaningAO}, we generate the BadNets square and the Trojan square trigger. For BadNets, we adopt the implementation in \citet{art2018}. For Trojan Attack, we use a pretrained ResNet-18 model on the CIFAR-10 dataset and employ the implementation in \citet{pang:2022:eurosp}. For each attack, we evaluate its effectiveness by training a decision tree model on the poisoned dataset. The accuracy on a clean test set remains nearly unchanged compared to a model trained on an uncontaminated training set, while the attack success rate on a held-out poisoned test set is guaranteed to exceed $75\%$.

\begin{table}[!htbp]
    \centering
    \small
    \caption{\textbf{A summary of all the datasets used in \ref{sec:cell-level outlier detection},~\ref{sec:cell fixation}, and Appendix ~\ref{sec:data val exp}.} These datasets have been commonly used in previous literature \citep {ghorbani2019data, kwon2021beta, kwon2023data}}
    \label{dataset}
    \resizebox{\textwidth}{!}{
        \begin{tabular}{l|cccc}
        \toprule
         Name & Total sample size & Input dimension & Majority class proportion & OpenML ID\\
         \midrule
lawschool & 20800 & 6 & 0.679& 43890 \\
electricity & 38474 & 6 & 0.500  & 44080 \\
fried & 40768 & 10 & 0.502 & 901 \\
2dplanes & 40768 & 10 & 0.501 & 727  \\
creditcard & 30000 & 23 & 0.779& 42477 \\
pol & 15000 & 48 & 0.664 & 722 \\
MiniBooNE & 72998 & 50 & 0.500 & 43974  \\
jannis & 57580 & 54 & 0.500 & 43977 \\
nomao & 34465 & 89 & 0.715 & 1486  \\
vehicle\_sensIT & 98528 & 100 &0.500 & 357  \\ 
gas\_drift & 5935 & 128 & 0.507 & 1476 \\
musk  & 6598 & 166 & 0.846 & 1116 \\
        \midrule
        \end{tabular}}
\label{datasummery}
\end{table}

\section{Additional experimental results}

\subsection{Additional results for cell-level outlier detection}
\label{sec: two-stage}

\paragraph{Additional results on multi-class classification datasets}
We conducted cell-level outlier detection experiments (as described in Section \ref{sec:cell-level outlier detection}) on three multi-class classification datasets from the UCI Machine Learning repository \citep{kelly_longjohn_nottingham}. As shown in the Table \ref{tab:multi-class}, \texttt{2D-OOB} displays superior detection performance and efficiency.

\begin{table}[ht]
\centering
\caption{\textbf{Cell-level outlier detection results on multi-class classification datasets.} The average and standard error of the AUC and run-time (in
seconds) based on $30$ independent experiments are denoted by  ``average $\pm$ standard error''.}
\small
\resizebox{0.8\textwidth}{!}{
\begin{tabular}{l|c|c|c|c}
\hline
\toprule
\multirow{2}{*}{Dataset}  & \multicolumn{2}{c}{\textbf{AUC} $\uparrow$} & \multicolumn{2}{|c}{\textbf{Run-time} $\downarrow$} \\
& \texttt{2D-OOB} (ours) & \texttt{2D-KNN} & \texttt{2D-OOB} (ours) & \texttt{2D-KNN} \\
\midrule
Covertype & \textbf{0.81$\pm$0.0156} & 0.63$\pm$0.0183 & \textbf{3.98$\pm$0.5774} & 962.34$\pm$1.3383 \\
Dry Bean & \textbf{0.88$\pm$0.0059} & 0.85$\pm$0.0192 & \textbf{3.31$\pm$0.4586} & 347.80$\pm$2.0212 \\
Wine Quality & \textbf{0.86$\pm$0.0178} & 0.57$\pm$0.0252 & \textbf{2.90$\pm$0.1240} & 269.14$\pm$1.1825 \\
\hline
\end{tabular}}
\label{tab:multi-class}
\end{table}

\paragraph{Additional baseline: two-stage attribution}
Once we obtain the data valuation scores, an alternative solution approach to determining cell-level attributions involves leveraging feature attribution methods such as SHAP \citep{lundberg2017unified}. We explore an additional baseline method building upon this idea: initially, \texttt{Data-OOB} (or any other data valuation method) is computed for the $i$-th data point, denoted as $dv_i$. Subsequently, \texttt{TreeSHAP} \citep{lundberg2018consistent} is fitted, using $dv_i$ as the target and the concatenation of $x_i$ and $y_i$ (denoted as $x_i \oplus y_i$) as the predictor. The derived local feature attributions are then interpreted as joint valuation results. We refer to this method as ``two-stage attribution''.

Table \ref{tab:baseline_two_stage} indicates that \texttt{2D-OOB} substantially outperforms its two-stage counterpart. We hypothesize that the superiority of our method stems from integrating data valuation and feature attribution into a cohesive framework. Conversely, the two-stage method treats data valuation and feature attribution as separate processes, potentially resulting in sub-optimal outcomes. Furthermore, due to the computational complexity of \texttt{TreeSHAP}, the two-stage approach is notably slower compared to our method.

\begin{table*}[t]
\caption{\textbf{Cell-level outlier detection results (AUC) of \texttt{2D-OOB} and the two-stage attribution. }Our method shows a better performance than the alternative method by a significant performance margin.}
\label{tab:baseline_two_stage}
\vskip 0.15in
\begin{center}
\resizebox{0.5\textwidth}{!}{
\begin{tabular}{l|c|c}
\toprule
\multirow{2}{*}{Dataset}  & \multicolumn{2}{c}{\textbf{AUC} $\uparrow$} \\
& \texttt{2D-OOB} (ours) & Two-stage attribution \\
\midrule
lawschool        & \textbf{0.88$\pm$ 0.0027} & 0.83$\pm$ 0.0064 \\
electricity      & \textbf{0.77$\pm$ 0.0072} & 0.64$\pm$ 0.0093 \\
fried            & \textbf{0.91$\pm$ 0.0015} & 0.82$\pm$ 0.0068 \\
2dplanes         & \textbf{0.87$\pm$ 0.0015} & 0.80$\pm$ 0.0058 \\
creditcard       & \textbf{0.72$\pm$ 0.0028} & 0.67$\pm$ 0.0051 \\
pol              & \textbf{0.82$\pm$ 0.0014} & 0.78$\pm$ 0.0042 \\
MiniBooNE        & \textbf{0.77$\pm$ 0.0058} & 0.70$\pm$ 0.0041 \\
jannis           & \textbf{0.83$\pm$ 0.0042} & 0.62$\pm$ 0.0043 \\
nomao            & \textbf{0.79$\pm$ 0.0021} & 0.71$\pm$ 0.0041 \\
vehicle\_sensIT  & \textbf{0.81$\pm$ 0.0014} & 0.64$\pm$ 0.0033 \\
gas\_drift       & \textbf{0.86$\pm$ 0.0010} & 0.73$\pm$ 0.0143 \\
musk             & \textbf{0.88$\pm$ 0.0008} & 0.68$\pm$ 0.0028 \\
\midrule
Average          & \textbf{0.83}  & 0.72 \\
\bottomrule
\end{tabular}
}
\end{center}
\vskip -0.1in
\end{table*}

\paragraph{A noisy setting with more outlier cells} We consider a more challenging scenario with increased outlier levels, where both the row outlier ratio and column outlier ratio increase from $20\%$ (as in Section \ref{sec:cell-level outlier detection}) to $50\%$.  Consequently, this leads to $25\%$ ($50\% \times 50\%$) of the cells being replaced with outlier values.  We follow the same outlier generation procedure outlined in Appendix \ref{sec:cell-level outlier generation}. The findings, presented in Table \ref{tab:ablation_outlier_ratio}, demonstrate that our method maintains a significantly superior performance over \texttt{2D-KNN}, even under such a noisy setting.

\begin{table}[t]
\small
\caption{\textbf{Cell-level outlier detection results (AUC) of different joint valuation methods when the row outlier ratio and column outlier ratio are both $50\%$. }Our method consistently outperforms \texttt{2D-KNN} even in the presence of significant noise.}
\label{tab:ablation_outlier_ratio}
\vskip 0.15in
\begin{center}
\resizebox{0.5\textwidth}{!}{
\begin{tabular}{l|c|c}
\toprule
\multirow{2}{*}{Dataset}  & \multicolumn{2}{c}{\textbf{AUC} $\uparrow$} \\
& \texttt{2D-OOB} (ours) & \texttt{2D-KNN} \\
\midrule
lawschool        & \textbf{0.75$\pm$ 0.0084} & 0.60$\pm$ 0.0144 \\
electricity      & \textbf{0.64$\pm$ 0.0155} & 0.60$\pm$ 0.0106 \\
fried            & \textbf{0.74$\pm$ 0.0087} & 0.54$\pm$ 0.0027 \\
2dplanes         & \textbf{0.74$\pm$ 0.0063} & 0.55$\pm$ 0.0033 \\
creditcard       & \textbf{0.63$\pm$ 0.0055} & 0.61$\pm$ 0.0053 \\
pol              & \textbf{0.69$\pm$ 0.0069} & 0.60$\pm$ 0.0042 \\
MiniBooNE        & \textbf{0.67$\pm$ 0.0128} & 0.60$\pm$ 0.0048 \\
jannis           & \textbf{0.70$\pm$ 0.0113} & 0.53$\pm$ 0.0014 \\
nomao            & \textbf{0.70$\pm$ 0.0088} & 0.58$\pm$ 0.0052 \\
vehicle\_sensIT  & \textbf{0.70$\pm$ 0.0075} & 0.55$\pm$ 0.0031 \\
gas\_drift       & \textbf{0.73$\pm$ 0.0077} & 0.65$\pm$ 0.0114 \\
musk             & \textbf{0.77$\pm$ 0.0063} & 0.64$\pm$ 0.0038 \\
\midrule
Average          & \textbf{0.71}  & 0.59 \\
\bottomrule
\end{tabular}
}
\end{center}
\vskip -0.1in
\end{table}

\subsection{Additional results for cell fixation experiment}
\label{sec:additional fixation}

Figure \ref{Fig:cell_fixation_appendix} presents the results for the cell fixation experiment on $6$ additional datasets. \texttt{2D-OOB} excels in precisely detecting and correcting relevant cell outliers.

\begin{figure}[t]
  \centering
  \includegraphics[width = \textwidth]{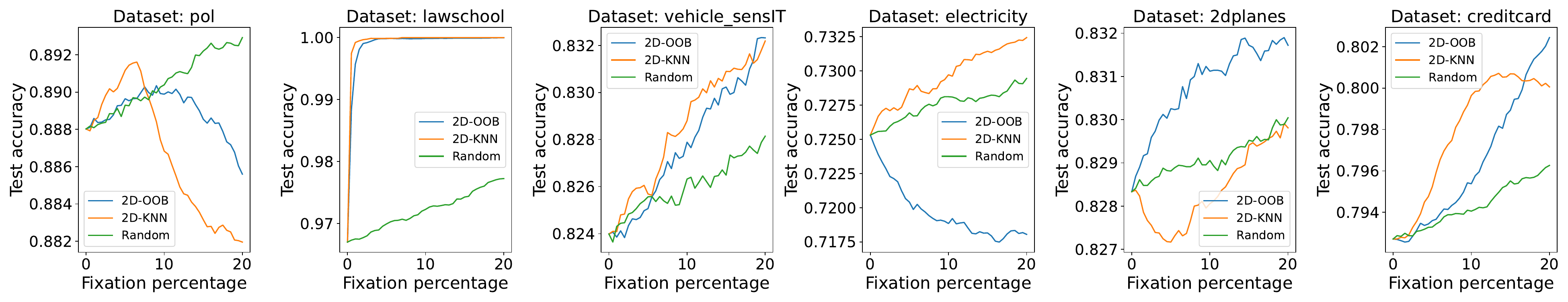}
  \vspace{-6mm}
  \caption{\textbf{Cell fixation experiment results (test accuracy curves) for \texttt{2D-OOB}, \texttt{2D-KNN} and a random baseline.} We replace cell values with ground-truth values from the cells with the lowest valuation to the highest valuation. The results from $6$ additional datasets are displayed. We conduct $30$ independent trials and report the average results. A higher curve indicates better performance. \texttt{2D-OOB} sets itself apart by its remarkable precision in detecting and rectifying relevant cell outliers.
  }
  \label{Fig:cell_fixation_appendix}
\end{figure}

\paragraph{The scenario without ground-truth knowledge} Following \citet{liu20232d}, we examine a situation where external information on the ground-truth annotations of outlier cells is not accessible. In this scenario, we address these outliers by substituting them with the average of other cells in the same feature column. This procedure starts by addressing cells with the lowest valuations, based on the hypothesis that correcting these cells is likely to maintain or potentially improve the model's performance. As depicted in Figure \ref{Fig:cell_removal}, \texttt{2D-OOB} conforms to this expected trend, demonstrating the effectiveness of our method in joint valuation. Conversely, \texttt{2D-KNN} fails to show similar performance improvements.

\begin{figure}[t]
  \centering
  \includegraphics[width = \textwidth]{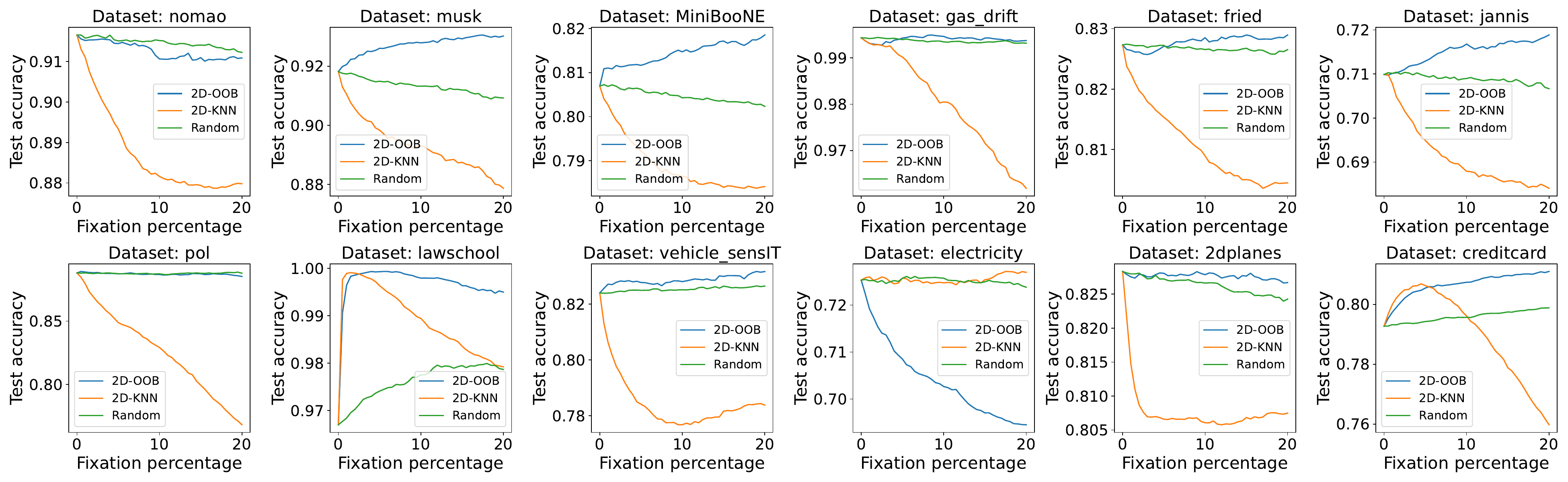}
  \vspace{-6mm}
  \caption{\textbf{Cell fixation experiment (without ground-truth knowledge) results (test accuracy curves) for \texttt{2D-OOB}, \texttt{2D-KNN} and a random baseline.} We replace cell values with column mean imputations from cells with the lowest value to the highest value. We report the average results of $30$ independent trials from $12$ datasets. A higher curve indicates better performance. 
  }
  \label{Fig:cell_removal}
\end{figure}

\subsection{Additional results for backdoor trigger experiment}
\label{sec:additional backdoor}
We provide additional qualitative examples of the backdoor trigger detection experiment in Figure \ref{Fig:backdoor appendix}.

\begin{figure}[h]
  \centering
  \includegraphics[width = \textwidth]{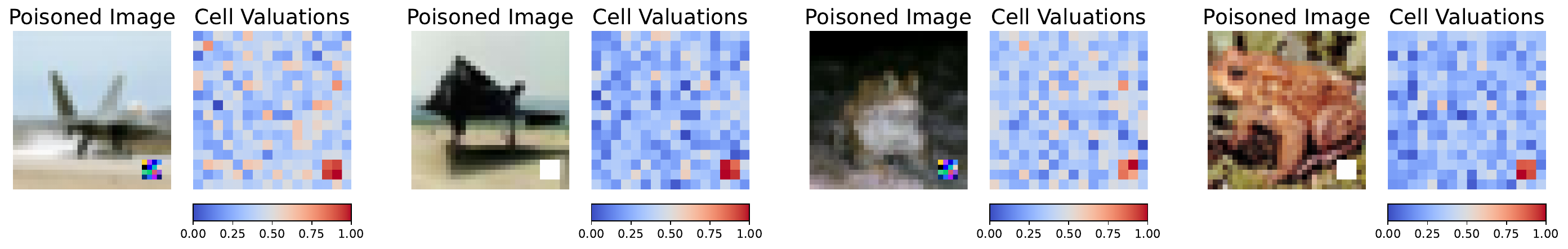}
  \vspace{-6mm}
  \caption{
    \textbf{Qualitative results on more datasets for the backdoor trigger detection experiment.} The first two images are from the class ``airplane'' but have been relabeled as ``automobile'', while the latter two images are from the class ``frog'' and have been relabeled as ``horse''. 
  }
  \label{Fig:backdoor appendix}
\end{figure}

\section{Proof of Proposition 3.1}
\label{sec:proof}

\begin{proof} 
For simplicity, we denote $\phi^{\mathrm{OOB}}_i(S)$ as $\phi_i(S)$ and $\psi^{\mathrm{2D-OOB}}_{ij}$ as $\psi_{ij}$ in the proof. Let $\mathcal{S} := \{S \subseteq [d]\}$ represent the set of all feature subsets, where $S$ is a feature subset. We denote the cardinality of $\mathcal{S}$ as $L:=|\mathcal{S}| = 2^d$. Let $\boldsymbol{\gamma}_{b}$ be a weight vector $\boldsymbol{\gamma}_{b} := (\gamma_{b1},\ldots,\gamma_{bL})$ for all $b\in[B]$, where $\gamma_{bl} \in \{0,1\}$ and $\gamma_{bl} = 1$ indicates the $l$-th subset is used in the $b$-th weak learner. With $\{\mathbf{w}_{b},\boldsymbol{\gamma}_{b},\hat{f}_b\}_{b=1}^B$, we can denote the $i$-th \texttt{Data-OOB} on the $l$-th feature subset $S_l$ as
\begin{align*}
   \phi_i(S_l) = \frac{\sum_{b=1}^B\mathds{1}(w_{bi}=0)\mathds{1}(\gamma_{bl}=1)T(y_i,\hat{f}_{b}(x_{i, S_l}))}{\sum_{b=1}^B\mathds{1}(w_{bi}=0)\mathds{1}(\gamma_{bl}=1)}.
\end{align*}

With slight abuse of notation, the formulation of \texttt{2D-OOB} in~\eqref{eqn:df_oob} can be expressed as follows.
\begin{align*}
    \psi_{ij}  &= \frac{\sum_{l=1}^{L}\sum_{b=1}^B\mathds{1}(w_{bi}=0)\mathds{1}(\gamma_{bl}=1)\mathds{1}(j\in S_l)T(y_i,\hat{f}_{b}(x_{i, S_l}))}{\sum_{l=1}^{L}\sum_{b=1}^B\mathds{1}(w_{bi}=0)\mathds{1}(\gamma_{bl}=1)\mathds{1}(j\in S_l)} \\
    &= \sum_{l=1}^{L}\mathds{1}(j\in S_l)\frac{\sum_{b=1}^B\mathds{1}(w_{bi}=0)\mathds{1}(\gamma_{bl}=1)T(y_i,\hat{f}_{b}(x_{i, S_l}))}{\sum_{l=1}^{L}\sum_{b=1}^B\mathds{1}(w_{bi}=0)\mathds{1}(\gamma_{bl}=1)\mathds{1}(j\in S_l)} \\
    &= \sum_{l=1}^{L}\mathds{1}(j\in S_l)\frac{\sum_{b=1}^B\mathds{1}(w_{bi}=0)\mathds{1}(\gamma_{bl}=1)}{\sum_{l=1}^{L}\sum_{b=1}^B\mathds{1}(w_{bi}=0)\mathds{1}(\gamma_{bl}=1)\mathds{1}(j\in S_l)}\frac{\sum_{b=1}^B\mathds{1}(w_{bi}=0)\mathds{1}(\gamma_{bl}=1)T(y_i,\hat{f}_{b}(x_{i, S_l}))}{\sum_{b=1}^B\mathds{1}(w_{bi}=0)\mathds{1}(\gamma_{bl}=1)} \\
    &= \sum_{l=1}^{L}\alpha_{i,j,l}\phi_i(S_l),\\
\end{align*}
where $\alpha_{i,j,l} \propto \mathds{1}(j\in S_l)\sum_{b=1}^B\mathds{1}(w_{bi}=0)\mathds{1}(\gamma_{bl}=1), \forall i \in [n], j \in [d], l \in [L]$ and $\sum_{l=1}^{L} \alpha_{i,j,l} = 1$. Define $P_i(S_l|j \in S_l, \{w_{bi}\}_{b=1}^{B}) = \alpha_{i, j,l}$, which specifies an empirical distribution of the feature subset $S$, conditioned on $j \in S$ and the bootstrap sampling process. Here, $\mathds{1} (j \in S_l)$ indicates that the distribution is conditioned on the inclusion of the $j$-th feature within the feature subset $S_l$.  $w _{bi}$ indicates whether the $i$-th sample is out-of-bag in the $b$-th bootstrap, and $\gamma _{bl}$ indicates whether the $l$-th feature subset is selected in the $b$-th weak learner. Thus, the point mass is determined by the sampling process, leading to:  \begin{align*}
    \psi_{ij}  = \mathbb{E}_{\hat{F}_S} [ \phi_i (S) \mid j \in S].
\end{align*}

\end{proof}

\section{Data valuation experiment}

\label{sec:data val exp}
In this section, we show that \texttt{2D-OOB-data}, the marginalization of \texttt{2D-OOB},  offers an effective approach to data valuation. This serves as the basis of our enhanced performance in joint valuation.

\paragraph{Marginalization}
\texttt{2D-OOB} aims to attribute data contribution through cells. Consequently, by summing up \texttt{2D-OOB} over all columns, we can derive data contribution values. For $i\in [n]$, we define the \texttt{2D-OOB-data} $\psi^{data}_{i}$ as follows.
\begin{equation}
    \psi^{data}_{i} := \frac{1}{d} \sum_{j=1} ^d \psi^{\mathrm{2D-OOB}}_{ij}.
    \label{eqn:2d_oob_data}
\end{equation}

Based on discussions in Section \ref{sec:theory}, the marginalizations also connect with \texttt{Data-OOB}:

\begin{proposition}
For all $i\in[n]$, the marginalizations $\psi^{data}_{i}$ can be expressed as follows.
\begin{align*}
    \psi^{data}_{i} &= \mathbb{E}_{\hat{F}_S} [ \phi^{OOB}_i (S)], 
\end{align*}
where the notations follow the same definitions as Proposition \ref{prop:representation}.
\label{prop:representation: marginalization}
\end{proposition}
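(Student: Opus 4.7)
The plan is to derive Proposition~\ref{prop:representation: marginalization} as a direct consequence of Proposition~\ref{prop:representation}. Starting from the definition $\psi_i^{data} = \frac{1}{d}\sum_{j=1}^d \psi_{ij}^{\mathrm{2D\text{-}OOB}}$, I would substitute the conditional expectation representation from Proposition~\ref{prop:representation} to obtain
\begin{equation*}
\psi_i^{data} = \frac{1}{d}\sum_{j=1}^d \mathbb{E}_{\hat{F}_S}\!\bigl[\phi_i^{\mathrm{OOB}}(S) \,\big|\, j \in S\bigr].
\end{equation*}
The goal is then to show that this arithmetic average of conditional expectations collapses to the unconditional expectation $\mathbb{E}_{\hat{F}_S}[\phi_i^{\mathrm{OOB}}(S)]$.

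The main tool is the conditional-expectation identity $\mathbb{E}[X \mid j \in S] = \mathbb{E}[X\,\mathds{1}(j\in S)]/P(j \in S)$. After applying this identity to each summand and interchanging the finite sum with the expectation, the expression becomes
\begin{equation*}
\psi_i^{data} = \mathbb{E}_{\hat{F}_S}\!\left[\phi_i^{\mathrm{OOB}}(S) \cdot \frac{1}{d}\sum_{j=1}^d \frac{\mathds{1}(j \in S)}{P(j \in S)}\right].
\end{equation*}
The remaining task is to verify that the bracketed weight equals $1$ for every realization of $S$ drawn from $\hat{F}_S$. Under the subset bagging scheme described in Section~\ref{sec:formalization}, each weak learner uses a fixed-size random feature subset with size $K$ chosen uniformly at random from $\binom{[d]}{K}$ (as is standard in random forests). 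By symmetry of the sampling, the empirical marginal $P(j \in S)$ is equal to $K/d$ for every $j \in [d]$, and $\sum_{j=1}^d \mathds{1}(j \in S) = |S| = K$ pointwise. These two facts together yield $\frac{1}{d}\sum_j \mathds{1}(j \in S)/P(j \in S) = \frac{1}{d} \cdot \frac{d}{K} \cdot K = 1$, which collapses the expression to $\mathbb{E}_{\hat{F}_S}[\phi_i^{\mathrm{OOB}}(S)]$ as desired.

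The main obstacle I anticipate is the symmetry assumption on $\hat{F}_S$: the identity $P(j\in S) = K/d$ with $|S|$ constant is what makes the averaging-over-$j$ cancel the conditioning. This is implicit in the paper's implementation (fixed feature-subset ratio $K/d$, uniform random selection of features per weak learner), but it should be made explicit in the proof. An alternative route that avoids this assumption would be to carry the $\alpha_{i,j,l}$ weights from the proof of Proposition~\ref{prop:representation} through the marginalization and show directly that $\frac{1}{d}\sum_j \alpha_{i,j,l}$ equals the unconditional empirical weight $\beta_{i,l} = N_{il}/\sum_{l'} N_{il'}$; this reduces to the same symmetry condition $\sum_{l'}\mathds{1}(j \in S_{l'})N_{il'}$ being constant in $j$, reinforcing that the fixed-size uniform sampling assumption is the essential ingredient.
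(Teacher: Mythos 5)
Your proposal reaches the right conclusion and, in its final ``alternative route,'' essentially reproduces the paper's actual computation: the paper's proof simply writes $\psi_{ij}^{\mathrm{2D\text{-}OOB}}=\sum_{l}\alpha_{i,j,l}\,\phi_i^{\mathrm{OOB}}(S_l)$, averages over $j$, swaps the two finite sums to get weights $\tfrac{1}{d}\sum_{j}\alpha_{i,j,l}$, observes that these are nonnegative and sum to one, and then \emph{defines} the unconditional empirical distribution $\hat{F}_S$ (for this proposition) to be exactly those averaged weights, so the identity holds by construction. The paper never claims, nor needs, that $\tfrac{1}{d}\sum_j\alpha_{i,j,l}$ equals the ``natural'' unconditional weight $N_{il}/\sum_{l'}N_{il'}$ with $N_{il}=\sum_b\mathds{1}(w_{bi}=0)\mathds{1}(\gamma_{bl}=1)$; it sidesteps that question entirely.

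This is where your primary route has a genuine gap that the paper's definitional move avoids. Your key step asserts that the empirical marginal satisfies $P(j\in S)=K/d$ for every $j$, so that the bracketed weight $\tfrac{1}{d}\sum_j\mathds{1}(j\in S)/P(j\in S)$ equals $1$ pointwise. For the \emph{empirical} distribution induced by $B$ actual draws (further restricted to bags where datum $i$ is out-of-bag), the marginals $p_j=P(j\in S)$ fluctuate and are not exactly $K/d$; the weight then becomes $\tfrac{1}{d}\sum_{j\in S}1/p_j$, which varies with $S$, and $\mathbb{E}[\phi\cdot W]=\mathbb{E}[\phi]$ fails unless $W\equiv 1$ (having $\mathbb{E}[W]=1$ is not enough, since $W$ can correlate with $\phi_i^{\mathrm{OOB}}(S)$). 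You correctly flag this symmetry condition as the crux, but you treat it as an implicit assumption to be made explicit, whereas the paper resolves it differently: it does not assume symmetry at all, at the cost of $\hat{F}_S$ in this proposition being whatever distribution the $j$-averaged conditional weights happen to induce, rather than the canonical frequency-of-sampled-subsets distribution your argument targets. Your observation that consistency between the conditional representation in Proposition~\ref{prop:representation} and a single underlying unconditional $\hat{F}_S$ requires $\sum_{l'}\mathds{1}(j\in S_{l'})N_{il'}$ to be constant in $j$ is accurate and is a subtlety the paper glosses over.
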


\begin{proof}
    
Based on the definition of \texttt{2D-OOB-Data}, for $i \in [n]$, 
\begin{align*}
    \psi^{data}_{i} := \frac{1}{d} \sum_{j=1} ^d \psi^{\mathrm{2D-OOB}}_{ij} &=  \frac{1}{d} \sum_{j=1} ^d \sum_{l=1}^{L}\alpha_{i,j,l}\phi^{\mathrm{OOB}}_i(S_l) \\
    &= \sum_{l=1}^{L} (\frac{1}{d} \sum_{j=1}^d\alpha_{i,j,l}) \phi^{\mathrm{OOB}}_i(S_l),\\
\end{align*}
where $\alpha_{i,j,l}$ is defined in Appendix \ref{sec:proof}. We have $\sum_{l=1}^{L}(\frac{1}{d} \sum_{j=1}^d\alpha_{i,j,l}) = \frac{1}{d}\sum_{j=1}^d\sum_{l=1}^{L}\alpha_{i,j,l} = 1$. Denote $\mathbb{P}_i(S_l|\{w_{bi}\}_{b=1}^{B}) = \frac{1}{d} \sum_{j=1}^d\alpha_{i,j,l}$, which induces the empirical expectation of \texttt{Data-OOB} with respect to $S_l$. 
\end{proof}

Proposition \ref{prop:representation: marginalization} indicates \texttt{2D-OOB-data} $\psi^{data}_{i}$ can be expressed as the average \texttt{Data-OOB} value for the $i$-th data point. As a result, \texttt{2D-OOB-data} is expected to inherit the advanced ability of \texttt{Data-OOB} in terms of data valuation, as will be empirically examined next.

\paragraph{Experimental setting} Following the standard protocol in \citet{kwon2021beta,kwon2023data} and \citet{jiang2023opendataval}, we randomly select $10$\% of the data points and flip its label to the other class. For joint valuation methods, we calculate the valuation of each cell and perform the marginalization over features to obtain the data valuation scores. Mislabeled data detection and data removal experiments are examined based on this setting. For the baseline methods, we further incorporate several state-of-the-art data valuation methods including \texttt{DataShapley} \citep{ghorbani2019data}, \texttt{KNNShapley} \citep{jia2019efficient}, \texttt{DataBanzhaf} \citep{wang2022data}, \texttt{LAVA} \citep{just2023lava}, and \texttt{Data-OOB} \citep{kwon2023data}. Implementation details are listed below. To guarantee a fair comparison, we also employ the decision tree as the base model in \texttt{DataShapley} and \texttt{DataBanzhaf}.  We adopt the same $12$ datasets as outlined in Section \ref{sec:cell-level outlier detection}.

\paragraph{\texttt{Data-OOB}} \texttt{Data-OOB} involves fitting a random forest model without feature subset sampling, consisting of $1000$ decision trees.

\paragraph{\texttt{DataShapley}} We use a Monte Carlo-based algorithm. The Gelman-Rubin statistics is computed to determine the termination criteria of the algorithm. Following \citet{jiang2023opendataval}, We adopt the threshold to be $1.05$. 

\paragraph{\texttt{KNNShapley}} We set the number of nearest neighbors to be $10\%$ of the sample size following \citet{jia2019efficient}.

\paragraph{\texttt{LAVA}} We calculate the class-wise Wasserstein distance following \citet{just2023lava}. The ``OTDD'' framework is adopted to complete the optimal transport calculation.

\paragraph{\texttt{DataBanzhaf}} We adopt the implementation from \citet{jiang2023opendataval}. We set ``the number of models to train'' as $1000$.

\subsection{Mislabeled data detection} We calculate the precision-recall curve by comparing the actual annotations, which denote whether data points are mislabeled, against the data valuation scores computed by different methods. Mislabeled data typically have a detrimental impact on model performance. Therefore, data points that receive a lower valuation score are considered to have a higher chance of being mislabeled. We then determine AUCPR (the AUC of the precision-recall curve) as a quantitative metric to assess the detection efficacy.

As shown in Table \ref{tab:mislabeled}, \texttt{2D-OOB-data} consistently outperforms \texttt{2D-KNN-data} across all datasets, suggesting its superior ability to detect mislabeled data points. It is worth noting that \texttt{2D-OOB-data}'s results are on par with \texttt{Data-OOB}, while significantly exceeding the performance of other data valuation methods. These results are in line with our theoretical analysis regarding the resemblance between \texttt{Data-OOB} and \texttt{2D-OOB-data}. However, it is important to highlight that applying \texttt{Data-OOB} to the joint tasks is not feasible as mentioned earlier, underscoring the necessity for the development of \texttt{2D-OOB}.

\begin{table*}[t]
\caption{\textbf{Point-level mislabeled data detection results.} AUCPR of different data valuation and (marginalized) joint valuation methods. The average and standard error of the AUCPR based on $30$ independent experiments are denoted by ``average $\pm$ standard error''. Bold numbers denote the best method, for data valuation and joint valuation respectively. The AUCPR value for the \texttt{Random} method consistently remains at 0.5 across all datasets. \texttt{2D-OOB-data} exhibits performance comparable to \texttt{Data-OOB}, while significantly surpassing \texttt{2D-KNN-data} (the marginalization of \texttt{2D-KNN}) and all other data valuation methods.
}
\label{tab:mislabeled}
\begin{center}
\resizebox{\textwidth}{!}{
\begin{tabular}{l|ccccc|cc}
\toprule
\multirow{2}{*}{Dataset}  & \multicolumn{5}{c|}{\textbf{Data Valuation}} & \multicolumn{2}{c}{\textbf{Joint Valuation (Marginalized)}} \\
& \texttt{KNNShapley} & \texttt{LAVA} & \texttt{DataBanzhaf} & \texttt{DataShapley} & \texttt{Data-OOB} & \texttt{2D-KNN-data} & \texttt{2D-OOB-data} (ours) \\
\midrule
lawschool           & 0.66 $\pm$ 0.013 & 0.13 $\pm$ 0.003 & 0.46 $\pm$ 0.008 & 0.88 $\pm$ 0.007 & \textbf{1.00 $\pm$ 0.000} & 0.46 $\pm$ 0.011 & \textbf{0.99 $\pm$ 0.002} \\
electricity         & 0.22 $\pm$ 0.008 & 0.11 $\pm$ 0.002 & 0.18 $\pm$ 0.005 & 0.26 $\pm$ 0.007 & \textbf{0.44 $\pm$ 0.007} & 0.20 $\pm$ 0.006 & \textbf{0.39 $\pm$ 0.007} \\
fried              & 0.40 $\pm$ 0.014 & 0.11 $\pm$ 0.002 & 0.22 $\pm$ 0.007 & 0.35 $\pm$ 0.009 & \textbf{0.76 $\pm$ 0.007} & 0.34 $\pm$ 0.010 & \textbf{0.73 $\pm$ 0.008} \\
2dplanes           & 0.46 $\pm$ 0.016 & 0.12 $\pm$ 0.002 & 0.32 $\pm$ 0.007 & 0.54 $\pm$ 0.009 & \textbf{0.78 $\pm$ 0.008} & 0.44 $\pm$ 0.011 & \textbf{0.68 $\pm$ 0.010} \\
creditcard         & 0.37 $\pm$ 0.007 & 0.11 $\pm$ 0.003 & 0.16 $\pm$ 0.004 & 0.28 $\pm$ 0.006 & \textbf{0.40 $\pm$ 0.007} & 0.20 $\pm$ 0.005 & \textbf{0.40 $\pm$ 0.007} \\
pol                & 0.19 $\pm$ 0.017 & 0.11 $\pm$ 0.002 & 0.37 $\pm$ 0.010 & 0.58 $\pm$ 0.012 & \textbf{0.93 $\pm$ 0.004} & 0.29 $\pm$ 0.018 & \textbf{0.87 $\pm$ 0.005} \\
MiniBooNE          & 0.41 $\pm$ 0.013 & 0.13 $\pm$ 0.006 & 0.23 $\pm$ 0.007 & 0.41 $\pm$ 0.010 & \textbf{0.78 $\pm$ 0.007} & 0.36 $\pm$ 0.008 & \textbf{0.78 $\pm$ 0.007} \\
jannis             & 0.20 $\pm$ 0.007 & 0.11 $\pm$ 0.002 & 0.14 $\pm$ 0.003 & 0.17 $\pm$ 0.005 & \textbf{0.38 $\pm$ 0.010} & 0.19 $\pm$ 0.006 & \textbf{0.37 $\pm$ 0.010} \\
nomao              & 0.61 $\pm$ 0.012 & 0.14 $\pm$ 0.003 & 0.33 $\pm$ 0.010 & 0.58 $\pm$ 0.009 & \textbf{0.87 $\pm$ 0.006} & 0.33 $\pm$ 0.011 & \textbf{0.88 $\pm$ 0.005} \\
vehicle\_sensIT   & 0.22 $\pm$ 0.009 & 0.11 $\pm$ 0.002 & 0.21 $\pm$ 0.007 & 0.33 $\pm$ 0.011 & \textbf{0.56 $\pm$ 0.010} & 0.14 $\pm$ 0.005 & \textbf{0.56 $\pm$ 0.010} \\
gas\_drift        & 0.87 $\pm$ 0.013 & 0.16 $\pm$ 0.006 & 0.42 $\pm$ 0.009 & 0.75 $\pm$ 0.008 & \textbf{0.98 $\pm$ 0.002} & 0.88 $\pm$ 0.006 & \textbf{0.98 $\pm$ 0.002} \\
musk              & 0.33 $\pm$ 0.010 & 0.11 $\pm$ 0.003 & 0.31 $\pm$ 0.007 & 0.47 $\pm$ 0.012 & \textbf{0.85 $\pm$ 0.005} & 0.21 $\pm$ 0.008 & \textbf{0.85 $\pm$ 0.005} \\
\midrule
Average             & 0.41 & 0.12 & 0.28 & 0.47 & \textbf{0.73} & 0.34 & \textbf{0.71} \\
\bottomrule
\end{tabular}
}
\end{center}
\vskip -0.1in
\end{table*}

\subsection{Point removal experiment} Removing low-quality data points has the potential to enhance model performance. Based on this idea, we employ the point removal experiment, a widely used benchmark in data valuation \citep{kwon2023data,ghorbani2019data,kwon2021beta}. According to the calculated data valuation scores, we progressively remove data points from the dataset in \textit{ascending} order. Specifically, we begin by removing the data points with the lowest data valuations. Each time we remove a datum, we fit a logistic model and use the held-out test set consisting of $3000$ instances to evaluate the model performance. The expected behavior is that the model performance will improve initially as the detrimental data points are gradually eliminated from the training process. Removing an excessive number of data points may result in a drastically altered dataset.
Consequently, we opt to remove the bottom $20\%$ data points.

Test accuracy curves throughout the data removal process are shown for $12$ datasets (Figure \ref{Fig:point removal}). A higher curve signifies better performance in terms of data valuation. Overall, \texttt{2D-OOB-data} demonstrates similar performance to \texttt{Data-OOB}, while significantly outperforming all other data valuation methods and the random baseline. When a few data points with poor quality are removed, the test performance of \texttt{2D-OOB-data} exhibits an evident increase. However, such a positive trend does not apply to other popular data valuation methods including \texttt{DataShapley} and \texttt{LAVA}. These findings highlight the potential of \texttt{2D-OOB-data} in selecting a subset of critical data points that can maintain model performance when the dataset is pruned.

\begin{figure}
  \centering
  \includegraphics[width = 0.9\textwidth]{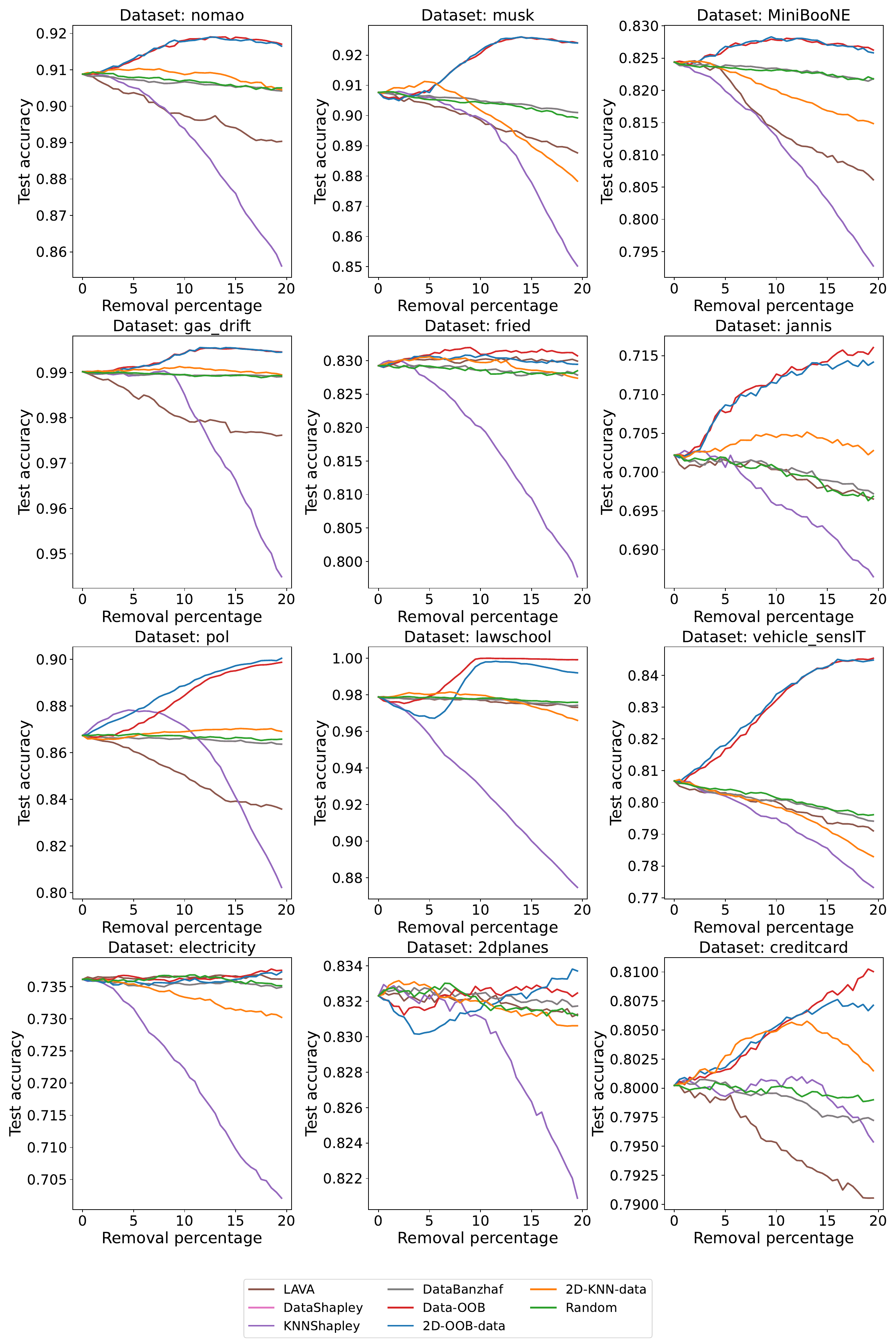}
  \vspace{-1mm}
  \caption{\textbf{Point removal experiment results (test accuracy curves) of $7$ data valuation methods -- \texttt{2D-OOB-data}, \texttt{2D-KNN-data}, \texttt{Data-OOB}, \texttt{LAVA}, \texttt{DataBanzhaf}, \texttt{DataShapley}, \texttt{KNNShapley} and a random baseline.} We remove data points from the lowest valuation to the highest valuation. The results from $6$ binary classification datasets are displayed. For each dataset, we conduct $30$ independent trials and report the average results.
  A higher curve indicates better performance. \texttt{2D-OOB-data} demonstrates superior ability in finding a set of helpful data points.   }
  \label{Fig:point removal}
\end{figure}

\end{document}